\newtheorem{theorem}{Theorem}
\newtheorem{assumption}{Assumption}
\newtheorem{proposition}{Proposition}
\newtheorem{lemma}{Lemma}
\title{Synthetic Survival Control: \\Extending Synthetic Controls for ``When-If'' Decision}
\author{%
  Jessy Xinyi Han \\
  Massachusetts Institute of Technology\\
  Cambridge, MA \\
  \texttt{xyhan@mit.edu} \\
  \And
  Devavrat Shah \\
  Massachusetts Institute of Technology\\
  Cambridge, MA \\
  \texttt{devavrat@mit.edu} \\
}
\begin{document}

\maketitle

\begin{abstract}
  Estimating causal effects on time-to-event outcomes from observational data is particularly challenging due to censoring, limited sample sizes, and non-random treatment assignment. The need for answering such ``when-if'' questions--how the \textit{timing} of an event would change under a specified intervention--commonly arises in real-world settings with heterogeneous treatment adoption and confounding. To address these challenges, we propose Synthetic Survival Control (SSC) to estimate counterfactual hazard trajectories in a panel data setting where multiple units experience potentially different treatments over multiple periods. 
  In such a setting, SSC estimates the counterfactual hazard trajectory for a unit of interest as a weighted combination of the observed trajectories from other units. To provide formal justification, we introduce a panel framework with a low-rank structure for causal survival analysis. Indeed, such a structure naturally arises under classical parametric survival models. Within this framework, for the causal estimand of interest, we establish identification and finite sample guarantees for SSC. We validate our approach using a multi-country clinical dataset of cancer treatment outcomes, where the staggered introduction of new therapies creates a quasi-experimental setting. Empirically, we find that access to novel 
  treatments is associated with improved survival, as reflected by lower post-intervention hazard trajectories relative to their synthetic counterparts. Given the broad relevance of survival analysis across medicine, economics, and public policy, our framework 
  offers a general and interpretable tool for counterfactual survival inference using observational data.
\end{abstract}

\section{Introduction}\label{sec:introduction}
Understanding and modeling the time until an event of interest happens is a fundamental problem across various domains, ranging from patient survival in healthcare \citep{han-petal-2024}, equipment failure in engineering \citep{fi15050153}, criminal recidivism in criminal justice \citep{han2025fairnessalgorithmsracialdisparities}, to customer churn \citep{gao2024causalcustomerchurnanalysis} and dynamic pricing in insurance \citep{pmlr-v202-choi23c}. Unlike classical machine learning algorithms focusing on binary or average outcomes, survival analysis explicitly models how risk evolves over time generally through a \emph{survival function}, which represents the probability of remaining event-free up to a certain time point, or a \emph{hazard (risk) function}, which describes the instantaneous probability of the event occurring at a given moment, conditional on not having experienced the event yet. Moreover, survival analysis accounts for censoring, where the event of interest is not observed within the study window or is preempted by a censoring event. As Samuel Beckett writes in \emph{Waiting for Godot}, ``Nothing happens, nobody comes, nobody goes, it's awful''–yet even the wait is informative. The absence of an event still revises our beliefs about underlying risk \citep{JMLR:v25:23-0888}. This time-to-event perspective enables us to understand not just \emph{whether} an event happens but also \emph{when} and more specifically, how the probability of occurrence evolves as time passes.

In applied settings, survival analysis plays more than the aforementioned descriptive role through estimation of survival distributions; it can also be used inferentially to assess how interventions affect the timing and risk of events. For example, do novel cancer therapies prolong patient survival? Our work is motivated precisely by such a real-world problem in oncology: evaluating the effectiveness of new therapies for patients with \emph{relapsed or refractory} T-cell lymphoma (TCL), a rare and aggressive form of blood cancer with historically poor prognosis \citep{han-petal-2024}. Typically, cancer patients first receive an initial, or \emph{first-line}, treatment. The disease is considered \emph{refractory} if this initial treatment fails to work at all and called \emph{relapsed} if it initially succeeds but the cancer later returns. In either scenario, patients then require an alternative, known as \emph{second-line}, treatment. Recently, novel single agent (SA) have been adopted as choices for second-line therapies in the United States, while many other countries continue to rely primarily on conventional chemotherapy (CC). This staggered adoption across countries provides a natural quasi-experimental setting for investigating the causal effects of novel therapies on patient survival. Such investigations require constructing \emph{counterfactual survival functions}, which are estimates of how patient survival trajectories would have evolved under alternative treatment strategies.

Inferring such counterfactual survival functions presents additional challenges beyond those faced in classical statistical analyses. First, survival analysis inherently deals with sparse data: individual-level right-censored data typically provides only a single observed time point—the minimum of the actual time-to-event and time-to-censoring, yet the goal is to recover full survival or hazard trajectories under different interventions. In the context of relapsed or refractory TCL, these challenges are further exacerbated. As a rare disease, TCL has limited participation from clinical centers, resulting in small sample sizes and magnified statistical uncertainty. Moreover, its aggressive progression makes it difficult to distinguish survival patterns between treatment groups over short follow-up periods, intensifying the need for more effective therapies and reliable estimation methods. Adding to these difficulties, the available data are observational and collected retrospectively across different countries and institutions. Treatment assignment is therefore non-random and occurs by country-level drug availability, making it particularly challenging to isolate causal effects from spurious associations.

To formalize this problem, it is helpful to situate it within a panel data setup. In classical panel data applications, the outcomes associated with a unit (e.g., country or institution) are repeated measurements of a scalar variable (a single real number) across multiple periods. Methods such as Synthetic Control \citep{abadie2010synthetic} and Synthetic Interventions \citep{agarwal2024syntheticinterventions} operate in such {\em data-rich} scenarios to leverage the structure among units for counterfactual estimations in the presence of confounding. However, our setup is {\em data-sparse} by the nature of time-to-event data: each individual often contributes only one observed time-to-event, or none if censored. Moreover, our target is not a scalar but an entire hazard function over time. This leads to the central question of our work: {\em Can methods like Synthetic Control, originally designed for scalar outcomes in data-rich scenarios, be generalized to handle functional, time-evolving survival processes under censoring in data-sparse setups?}

\subsection{Contributions}

In this work, our main contribution is to address this question by bridging the gap between the literature on Synthetic Control and Survival Analysis. 
We propose a panel data setting for survival analysis where multiple units are present and each unit experiences one of the possible treatments in each period. Within a unit and period, we observe multiple observations of, potentially censored, time-to-event. The potential outcomes associated with units in periods under a treatment can be characterized through ``potential survival functions'', which can be factorized into (latent) characteristics associated with units, periods, and treatments. This factorized representation is a natural extension of classical survival models such as Cox, where the log-hazard decomposes into additive contributions from baseline risk, covariates, and treatment. Generalizing this decomposition to higher dimensions suggests a low-rank or factorized  structure, with latent unit and period factors playing the role of unobserved confounders. Crucially, we allow for treatment assignment to arbitrarily depend on these (latent) factors of, thereby permitting (unobserved) confounding. 

Within this causal framework, we consider a causal estimand analogous to that in the traditional framework of synthetic control (see Section \ref{ssec:setup}). For such a causal estimand, we establish identification under minimal conditions (see Theorem \ref{theorem:theorem1}). To estimate the causal estimand, we propose the method of Synthetic Survival Control (SSC) (see Section \ref{ssec:ssc}). We establish finite sample consistency guarantee (and hence inference) property for such an estimator (see Theorem \ref{thm:consistency}). 

To demonstrate the soundness of the framework, %
we first validate it on synthetic data. 
We apply it to a real-world scenario in the context of evaluating outcomes associated with TCL cancer treatment. Specifically, through collaboration with leading cancer research institutes, we obtain clinical data across multiple countries (units) over different periods. For each patient, typically, there are multiple stages of treatment. We consider two stages (called \textit{first-line} and \textit{second-line}). Across all countries, patients receive the treatment CC (called \textit{control}) in stage one and two, with the exception of the United States where in the second stage, patients receive SA (called \textit{intervention}). In each stage, the outcome of interest is the time to treatment failure, defined as either the initiation of a new line of therapy or death. Therefore, this naturally forms a panel setup for survival analysis with two periods (first stage and second stage of treatment). As it happens, the dataset also contains patients in the US who receive CC in the second stage as well. We withhold such data and use SSC to do counterfactual estimation for patient outcomes in the US under CC in the second stage. We find that SSC faithfully recovers such counterfactual hazard function and thus validates the method. This is particularly significant given all the data challenges present in such observational clinical data.

Together, our work illustrates that counterfactual estimation on survival functions is feasible even with sparse, censored data when leveraging group-level quasi-experiments and survival-specific estimands. By integrating time-to-event modeling with causal inference, our work contributes to a growing literature on counterfactual survival analysis. %
In summary, this work offers a practical framework for survival-based counterfactual estimation, bridging methodological gaps and addressing real-world questions.

\subsection{Organization}

Section~\ref{sec:related} reviews related work on causal inference for survival outcomes and synthetic control methods. Section~\ref{sec:method} introduces our panel framework with low-rank structure, including identification, SSC estimator, and theoretical guarantees. Section~\ref{sec:empirics} describes the TCL dataset and and presents our empirical findings on treatment efficacy. Section~\ref{sec:conclusion} discusses implications, limitations, and future direction.

\section{Related Works}\label{sec:related}
\textbf{Synthetic Control, Panel Methods and Matrix Completion.}
Synthetic control methods reconstruct counterfactuals for a single treated unit by optimally weighting a pool of control units, under the assumption that a linear combination of controls can approximate the treated unit's pre‑treatment path. \citet{abadie2003economic} first applied this idea in economics to assess the costs of conflict in the Basque Country, and \citet{abadie2010synthetic} formalized it in a statistical framework for comparative case studies. Subsequent work has extended and unified synthetic control with other panel‑data approaches: \citet{doudchenko2016balancing} proposed a unified framework that integrates elements from synthetic controls, difference-in-differences (DiD), and regression-based approaches to address limitations inherent in each method when applied separately, and \citet{xu2017generalized} proposed a generalized synthetic control via interactive fixed effects models.  More recently, \citet{arkhangelsky2021synthetic} blended synthetic control with difference‑in‑differences in the synthetic DiD estimator, improving robustness to heterogeneous trends. \citet{agarwal2024syntheticinterventions} extended the synthetic control estimator to a synthetic intervention method which addresses the complementary question of what could have happened had a control unit received the treatment. Extending scalar outcomes to distributions, \citet{feitelberg2025distributionalmatrixcompletionnearest} developed a latent-factor nearest-neighbor method for imputing missing empirical distributions in a sparsely observed matrix by leveraging Wasserstein geometry, but do not model temporal structure, censoring, or interventions.

\textbf{Survival Analysis.}
Classical survival analysis focuses on modeling hazard and survival functions under censoring, with foundational works including Kaplan-Meier's non-parametric estimator \citep{kaplan1958nonparametric} and Cox's proportional hazards model \citep{cox1972regression}. These works typically reply on assumptions like random assignment of treatment, non-informative censoring, no noncompliance and so on. Along with the development of causal inference, these strong assumptions are gradually relaxed. \citet{robins1992correcting} developed rank‑preserving structural failure time models to correct for noncompliance; \citet{ipcw} proposed the inverse probability of censoring weighted (IPCW) method to address the missing data problem due to dependent censoring,  and marginal structural models in \citet{robins2000marginal} use inverse‑probability weighting to estimate time‑varying treatment effects.  

\textbf{Causal Survival Analysis.}
Several recent studies have explored how to adapt causal inference methods to time-to-event outcomes. One line of work focuses on adapting DiD to the survival setting \citep{wuHazardLinearProbability2022, wooldridgeSimpleApproachesNonlinear2023, deanerCausalDurationAnalysis2024}. \citet{cui2023} developed doubly robust and tree-based survival estimators. Other works take the perspective of matrix completion in past literature. For example, \citet{gao2024causalcustomerchurnanalysis} proposed a tensor block hazard model using 1-bit tensor completion for binary churn outcomes and intervention clustering but their method was focused on modeling the underlying propensity parameters that lead to binary events. This is different from our approach of adapting synthetic control's weighting philosophy to hazard trajectories under a panel framework with low-rank structure.

\section{Causal Framework: Panel Survival Analysis}\label{sec:method}

In this section, we develop a causal framework for estimating counterfactual survival outcomes using panel data, with a motivating application introduced in Section \ref{sec:introduction}, to evaluate the impact of novel cancer therapies on patient survival. %

\subsection{Setup}\label{ssec:setup}

We begin by describing the causal survival analysis problem in its most fundamental form.  
Our TCL study is naturally panel-like: multiple individuals are observed across countries and lines of therapy. Assume there are $N$ units (e.g. countries) and $P$ periods (e.g. $p \in \{0,1\}$ first-line vs. second-line therapy). %

\paragraph{Data Generating Process and DAG.}

To formalize the data generating process, we describe it in a directed acyclic graph (DAG) shown in Figure~\ref{fig:simple_dag}(a). %
For each unit $n\in [N]$ and period $p \in \{0, 1\}$, treatment assignment $D_{p, n} \in \{0, 1 \} $ acts on the event time $\tau_{p, n} \in \mathbb{R}^+$ (through a hazard process defined below). Unit- and period-level covariates $X_{p, n}$ influence both treatment and outcome. However, we do not observe $\tau_{p, n}$ directly, as it can be masked by the censoring time $C_{p, n} \in \mathbb{R}^+$ if $C_{p, n} < \tau_{p, n}$; we observe time $T_{p, n} = \min\{\tau_{p, n}, C_{p, n}\}$ and the event indicator 
$\Delta_{p, n} = \mathbb{I}\{\tau_{p, n} \leq C_{p, n}\}$. %

\begin{figure}[htbp]
\centering

\begin{subfigure}[t]{0.35\textwidth}
\centering
\resizebox{\linewidth}{!}{%
\begin{tikzpicture}[
  node distance=1cm,
  every node/.style={circle, draw=none, minimum size=0.5cm, align=center},
  solidarrow/.style={->, thick, >=Triangle},
  dashedarrow/.style={->, thick, dashed, >=Triangle}
]
\node[fill=red!40] (X) {$X_{p, n}$};
\node[below=of X, fill=blue!30] (D) {$D_{p, n}$};
\node[right=of D, fill=gray!20] (tau) {$\tau_{p, n}$};
\node[below=of tau, fill=gray!20] (C) {$C_{p, n}$};
\node[right=of tau, fill=green!20] (T) {$T_{p, n}$};

\draw[solidarrow] (D) -- (tau);
\draw[solidarrow] (D) -- (C);
\draw[solidarrow] (tau) -- (T);
\draw[solidarrow] (C) -- (T);

\draw[dashedarrow] (X) -- (D);
\draw[dashedarrow] (X) -- (tau);
\draw[dashedarrow] (X) -- (C);
\end{tikzpicture}%
}
\subcaption{Fully observed confounders}\label{fig:dag-A}
\end{subfigure}%
\hfill
\begin{subfigure}[t]{0.5\textwidth}
\centering
\resizebox{\linewidth}{!}{%
\begin{tikzpicture}[
  node distance=10mm and 12mm,
  every node/.style={circle, draw=none, minimum size=7mm, inner sep=1pt, align=center},
  obs/.style   ={fill=red!40},
  latent/.style={fill=pink!40},
  tauc/.style  ={fill=gray!20},
  treat/.style ={fill=blue!30},
  outc/.style  ={fill=green!20},
  dashedarrow/.style={->, thick, dashed, >=Triangle},
  solidarrow/.style ={->, thick, >=Triangle}
]

\node[tauc]  (Tau) {$\tau_{n,p}$};
\node[treat, left=of Tau]  (D)   {$D_{n,p}$};

\node[tauc,  below=of Tau](C)   {$C_{n,p}$};
\node[outc,  right=of Tau](T)   {$T_{n,p}$};

\node[latent, above=of D] (Un)  {$V_n$};   %
\node[latent, above=of T] (Up)  {$U_p$};   %

\begin{scope}[]
  \coordinate (ctr) at (Tau);
  \draw[line width=0.9pt, color=black!70, rotate around={ 45:(ctr)}]
        (ctr) ellipse [x radius=34mm, y radius=22mm];
  \draw[line width=0.9pt, color=black!70, rotate around={-40:(ctr)}]
        (ctr) ellipse [x radius=34mm, y radius=21mm];
\end{scope}

\node[font=\small, anchor=west,  inner sep=0pt] at ([xshift=5mm]Up.east)  {period $p$};
\node[font=\small, anchor=east,  inner sep=0pt] at ([xshift=-5mm]Un.west) {unit $n$};

\draw[solidarrow] (D)   -- (Tau);
\draw[solidarrow] (D)   -- (C);
\draw[solidarrow] (Tau) -- (T);
\draw[solidarrow] (C)   -- (T);

\draw[dashedarrow] (Un) -- (D);
\draw[dashedarrow] (Un) -- (Tau);
\draw[dashedarrow] (Un) -- (C);

\draw[dashedarrow] (Up) -- (D);
\draw[dashedarrow] (Up) -- (Tau);
\draw[dashedarrow] (Up) -- (C);

\end{tikzpicture}%
}
\subcaption{Unobserved confounder with factor structure}\label{fig:dag-B}
\end{subfigure}

\caption{DAGs of panel data generating processes. Dashed arrows indicate potential confounding paths. Gray and light pink nodes are latent at the time of analysis. $D_{p, n}$: treatment assignment, $\tau_{p, n}$: event time, $C_{p, n}$: censoring time, $T_{p, n}$: observed time. Figure (a) shows the most fundamental case with fully observed covariates $X_{p, n}$; (b) shows the setup where $X_{p, n}$ are unobserved and consist of latent unit factor $V_n$ and latent period factor $U_p$. The unit $n$ ellipse contains all unit-related information while the period $p$ ellipse contains all period-related information.}
\label{fig:simple_dag}
\end{figure}
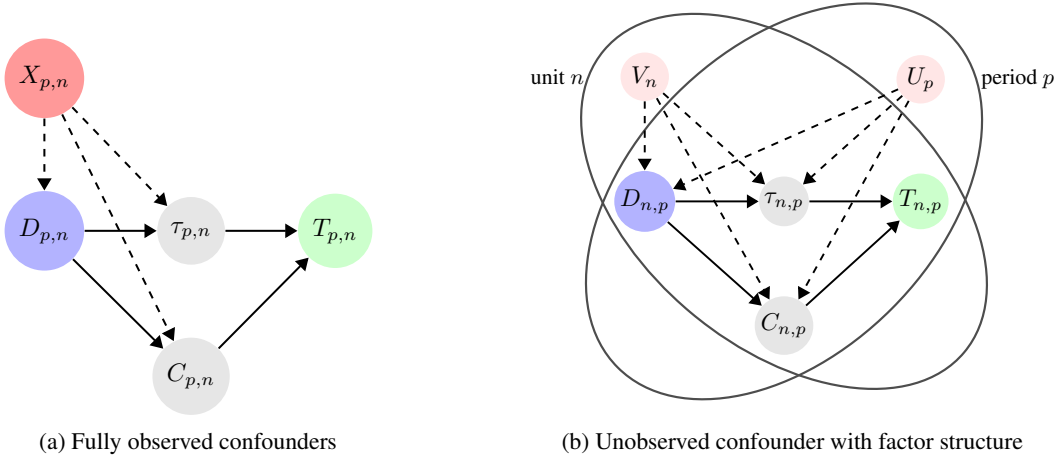

\paragraph{Potential Outcomes.} 

Following the potential outcomes framework of \citet{nayman1923} and \citet{rubin1974}%
, we define the \emph{potential hazard function} 
\[
h^{(d)}_{p, n}(t) := \lim_{\Delta t \to 0}
\frac{\mathbb{P}(t \leq \tau^{(d)}_{p, n} < t+\Delta t \mid \tau^{(d)}_{p, n} \geq t)}{\Delta t}, t \in [0, \widetilde{\tau}]
\] which represents instantaneous risk of event occurrence under treatment $d$ at time $t$ for unit $n$ in period $p$ since time-to-event tracking, as opposed to the typical calendar time. Because late-time survival data are sparse and unstable, we restrict attention to a finite evaluation horizon $\widetilde{\tau} > 0$. 

As such, $h^{(d)}_{p, n}(t)$ induces the potential survival function $S^{(d)}_{p, n}(t) = \exp(-\int_0^t h^{(d)}_{p, n}(s) ds)$ and density $f^{(d)}_{p, n}(t) = h^{(d)}_{p, n}(t) S^{(d)}_{p, n}(t)$ of the potential event time $\tau^{(d)}_{p, n}$. 

In our TCL example, $d=0$ denotes conventional chemotherapy (CC), while $d=1$ corresponds to novel single-agent (SA) therapy. Each patient can only receive one type of therapy in a treatment line and thus, we can never observe all possible potential outcomes – this is the fundamental challenge of causal inference.

\subsection{Causal Estimand}\label{sec:estimand}
Given the model encoded in Figure \ref{fig:dag-A}, we define our causal estimand formally. For a specific unit $n$, our target is $\theta_n(\cdot) = h^{(0)}_{1, n}(\cdot)$, which captures the potential hazard trajectory in the post-period $p=1$ under control ($d = 0$). %
    Returning to our motivating example, the estimand corresponds to the potential hazard trajectory in the \textit{second-line} therapy under the control treatment CC.

For identification and estimation, we have the following assumptions. 

\begin{assumption}[SUTVA]\label{ass:sutva} Fix a period $p\in \{0, 1\}$. 

\noindent(a) No interference.
For every unit $n \in [N]$, treatment level $d\in \{0, 1\}$, and any two assignment profiles $d_{-n},d'_{-n}\in\mathcal{D}^{[N]\setminus\{n\}}$ for all
units other than $n$,
\[
\big (\tau_{p, n}^{(d)} ,  C_{p, n}^{(d)}\big)
\ \text{has the same distribution under}\ (d_{p, n}=d, d_{p, -n})
\ \text{and under}\ (d_{p, n}=d, d'_{-n}).
\]
Equivalently, for each $n$ and $d$, $(\tau_{p, n}^{(d)},C_{p, n}^{(d)})$ depends only on $d_{p, n}$ and not on $d_{p, -n}$.

\smallskip
\noindent(b) Consistency.
If unit $n$ receives treatment $D_{p, n}=d$, then the event time and censoring time coincide with the corresponding potential quantities, i.e.,
\[
\tau_{p, n}  =  \tau_{p, n}^{(d)}
\quad\text{and}\quad
C_{p, n}  =  C_{p, n}^{(d)},
\]
\end{assumption}

 Assumption \ref{ass:sutva} ensures that factual hazards match counterfactual potential hazards under the assigned treatment, and there are no cross-individual spillovers. %

  \begin{assumption}[Non-informative Censoring]\label{ass:censoring}
$\tau_{p, n} \perp C_{p, n}\ \mid (D_{p, n}, X_{p, n})$.
\end{assumption}

Assumption \ref{ass:censoring} allows us to treat censored subjects as if they were removed \textit{at random} from the risk set, so the remaining risk set stays representative \footnote{Note that we assume this mainly for readability and clarity. A natural extension of our current framework using techniques like inverse probability censoring weighting (IPCW)\citep{ipcw} should work if this assumption is relaxed.}.

\subsection{Case 1: No Unobserved Confounding}

In the setting when we have a rich collection of covariates, a natural modeling assumption is that all pertinent covariates (and hence confounders) are observed.  
We first discuss how observed data can be utilized for estimating potential outcomes of interest in a simple single-period setting. This allows us to better understand the data generating process and separate the fundamental difficulties of causal survival analysis from those specific to the panel structure.

\subsubsection{Identification}

For identification, we also need the following positivity assumption.

\begin{assumption}[Positivity]\label{ass:positivity}
    For each period-unit $(p,n)$, any covariate configuration that occurs with positive probability must admit a positive probability of receiving the control treatment $d=0$. Formally, $\Pr(D_{p,n}=0 | X_{p,n}=x)>0 \text{ for all } x \in \mathrm{support}(X_{p,n})$.
\end{assumption}

We now present the identification result.

\begin{proposition}[Identification Under No Unobserved Confounding]\label{prop:identification}
Suppose Assumption \ref{ass:sutva}-\ref{ass:positivity} and no unobserved confounding, $\{\tau^{(0)}_{p, n}, \tau^{(1)}_{p, n}\} \perp D_{p, n} \mid X_{p, n}$. 
Then for some fixed time horizon $\widetilde{\tau}>0$, the marginal potential hazard function is identified as
\[
h^{(0)}_{p, n}(t)  =  
\frac{\mathbb{E}_{X}\!\big[h_{p, n}(t\mid D_{p, n}=0, X_{p, n}) S_{p, n}(t\mid D_{p, n}=0, X_{p, n})\big]}
     {\mathbb{E}_{X}\!\big[S_{p, n}(t\mid D_{p, n}=0, X_{p, n})\big]},  t\in[0,\widetilde{\tau}].
\]
\end{proposition}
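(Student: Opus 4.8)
The plan is to pass from the potential hazard to the potential \emph{survival} function, marginalize over covariates, strip off the potential-outcome superscript using the three identification assumptions plus no unobserved confounding, and then reassemble into a hazard. The key observation that dictates the shape of the answer is that the marginal potential hazard is \emph{not} the $X$-average of the conditional potential hazards (such an average would ignore that covariate strata survive to time $t$ at different rates); one must instead average survival functions and densities, which is exactly why the identified expression is a survival-weighted average of conditional hazards.

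First I would rewrite the target as $h^{(0)}_{p,n}(t) = -\frac{d}{dt}\log S^{(0)}_{p,n}(t) = f^{(0)}_{p,n}(t)/S^{(0)}_{p,n}(t)$, valid on the compact horizon $[0,\widetilde{\tau}]$ where the relevant densities and survival functions are well-behaved and bounded away from $0$. By the tower property, conditioning on $X_{p,n}$ gives $S^{(0)}_{p,n}(t)=\mathbb{E}_X\!\big[\Pr(\tau^{(0)}_{p,n}>t\mid X_{p,n})\big]$, and similarly the marginal density is the $X$-average of the conditional densities $f^{(0)}_{p,n}(t\mid X_{p,n})$.

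Next I would remove the unobservable $\tau^{(0)}_{p,n}$ in two moves applied inside the conditional expectation. No unobserved confounding, $\tau^{(0)}_{p,n}\perp D_{p,n}\mid X_{p,n}$, lets me insert the event $\{D_{p,n}=0\}$ into the conditioning, i.e.\ $\Pr(\tau^{(0)}_{p,n}>t\mid X_{p,n})=\Pr(\tau^{(0)}_{p,n}>t\mid D_{p,n}=0,X_{p,n})$, where Positivity (Assumption~\ref{ass:positivity}) guarantees this event has positive probability on $\mathrm{support}(X_{p,n})$ so the right-hand side is well defined. Consistency (Assumption~\ref{ass:sutva}(b)) then replaces the potential event time by the factual one on that event, $\Pr(\tau^{(0)}_{p,n}>t\mid D_{p,n}=0,X_{p,n})=\Pr(\tau_{p,n}>t\mid D_{p,n}=0,X_{p,n})=S_{p,n}(t\mid D_{p,n}=0,X_{p,n})$, and likewise $f^{(0)}_{p,n}(t\mid X_{p,n})=h_{p,n}(t\mid D_{p,n}=0,X_{p,n})\,S_{p,n}(t\mid D_{p,n}=0,X_{p,n})$ after the same substitutions. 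The one remaining subtlety is that $\tau_{p,n}$ is observed only through $T_{p,n}=\min\{\tau_{p,n},C_{p,n}\}$ and $\Delta_{p,n}$; here I invoke Non-informative Censoring (Assumption~\ref{ass:censoring}), which is precisely the classical condition under which the conditional survival and hazard of the event time given $(D_{p,n},X_{p,n})$ are recovered from the observed $(T_{p,n},\Delta_{p,n})$ distribution (e.g.\ through the conditional Nelson--Aalen / Kaplan--Meier representation), so $S_{p,n}(t\mid D_{p,n}=0,X_{p,n})$ and $h_{p,n}(t\mid D_{p,n}=0,X_{p,n})$ are genuine functionals of the observed data.

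Finally I would combine: $S^{(0)}_{p,n}(t)=\mathbb{E}_X\!\big[S_{p,n}(t\mid D_{p,n}=0,X_{p,n})\big]$, and differentiating in $t$ while exchanging $\frac{d}{dt}$ with $\mathbb{E}_X$ (justified by dominated convergence on $[0,\widetilde{\tau}]$, since the conditional hazards are integrable there) gives $-\frac{d}{dt}S^{(0)}_{p,n}(t)=\mathbb{E}_X\!\big[h_{p,n}(t\mid D_{p,n}=0,X_{p,n})\,S_{p,n}(t\mid D_{p,n}=0,X_{p,n})\big]=f^{(0)}_{p,n}(t)$; dividing the two yields the claimed formula. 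I expect the main obstacle to be conceptual rather than computational: keeping straight that the marginalization must be done at the level of survival/density and not at the level of the hazard, together with the bookkeeping needed so that Assumption~\ref{ass:censoring} legitimately turns the conditional-on-covariate quantities about the latent $\tau_{p,n}$ into observed-data functionals.
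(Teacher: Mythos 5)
Your proposal is correct and follows essentially the same route as the paper's proof: marginalize the potential survival function and density over $X$, strip the potential-outcome superscript via unconfoundedness, positivity, and consistency, and then form the hazard as the ratio $f^{(0)}/S^{(0)}$, which is exactly why the identified expression is a survival-weighted average of conditional hazards rather than a plain average. The extra remarks you add on censoring and on interchanging $\tfrac{d}{dt}$ with $\mathbb{E}_X$ are details the paper handles outside the proof (in its ``Handling Censoring'' discussion) or leaves implicit, but they do not change the argument.
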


See Proof in Appendix \ref{appendix:prop}.

\subsubsection{Estimation} \label{sec:ob_estimation}
\paragraph{Data.} In period \(p\in\{0,1\}\), we observe survival data for each unit (country in our example) \(n\in[N]\) consisting of \(K_{p, n}\) independent observations (patients in our example). For observation \(i=1,\dots,K_{p, n}\), they contribute the assigned treatment $D_{p,n,i}$, features $X_{p, n, i}$ and censored time‐to‐event observation \((T_{p,n,i}, \Delta_{p,n,i})\). %

\paragraph{Time-to-event Modeling.} Based on Proposition \ref{prop:identification}, we boil down the problem of estimating the potential hazard function to estimating the observational conditional hazard $h_{p, n}(t\mid D_{p, n}=0, X_{p, n})$ and survival functions $S_{p, n}(t\mid D_{p, n}=0,X_{p, n})$ . This can be done using standard survival models with certain modeling structures. For example, the Cox model encodes the additional structure assumption that covariates contribute to the hazard function multiplicatively, i.e., 
\begin{equation}\label{eq:cox}
h^{(d)}_{p, n}(t\mid X_{p, n})=\widetilde h(t) \exp (\beta^\top X_{p, n}+\gamma d),
\end{equation}
where $\widetilde h(t)$ is a baseline hazard.

In fact, Equation~\eqref{eq:cox} implies that the log-hazard decomposes additively into a baseline term $\alpha(t)=\log \widetilde h(t)$, covariate contribution $\beta^\top X$, and treatment effect $\gamma d$. More generally, this suggests representing the transformed hazard surface by an additive (or low-rank) decomposition: $
    \alpha(t) + \beta(X) + \gamma(d).
$
Such a representation naturally generalizes to higher-dimensional arrays of transformed hazards indexed by units, periods, treatments, and time, motivating the low-rank structure formulation developed in Section \ref{sec:unobserved} later.

\paragraph{Handling Censoring.}
Note that Assumptions~\ref{ass:censoring} and \ref{ass:positivity} ensure that $h_{p, n}(t\mid D_{p, n}=d,X_{p, n})$ and $S_{p, n}(t\mid D_{p, n}=d,X_{p, n})$ are estimable from right-censored data. For example, if the event-time model has parameter $\eta$, a probability density function $f(\cdot)$ and a survival function $S(\cdot)$, and the censoring model has parameter $\psi$, a probability density function $\widetilde f(\cdot)$ and a survival function $\widetilde S(\cdot)$, then under Assumption \ref{ass:censoring}, we have: for unit $n$, period $p$ and $dt \to 0_+$, 
\begin{align*}
    &\Pr(t \leq T_{p, n} < t+ dt,\ \Delta_{p, n}=1 \mid D_{p, n}, X_{p, n})\\
&\qquad = f_{p, n}(t \mid D_{p, n}, X_{p, n};\eta)  \widetilde S_{p, n}(t \mid D_{p, n}, X_{p, n};\psi)  dt\\
&\Pr(t \leq T_{p, n} < t+ dt,\ \Delta_{p, n}=0 \mid D_{p, n}, X_{p, n})\\
&\qquad = \widetilde{f}_{p, n}(t \mid D_{p, n}, X_{p, n};\psi)   S_{p, n}(t \mid D_{p, n}, X_{p, n};\eta)  dt.
\end{align*}
Hence, likelihood-based survival methods like the Cox model can continue working with the observed-data's %
log-likelihood 
\begin{align*}
  \ell(\eta, \psi)&= \sum_{n,p, i}\Big[
\Delta_{p, n, i}\big\{\log f_{p, n}(T_{p, n, i} \mid D_{p, n, i},X_{p, n, i};\eta)+\log \widetilde S_{p, n}(T_{p, n, i} \mid D_{p, n, i}, X_{p, n, i};\psi)\big\}\\
&+ (1-\Delta_{p, n, i})\big\{\log \widetilde{f}_{p, n}(T_{p, n, i} \mid D_{p, n, i},X_{p, n, i};\psi)+\log  S_{p, n, i}(T_{p, n, i} \mid D_{p, n, i},X_{p, n, i};\eta)\big\}
\Big]\\
&\propto \sum_{n, p, i}\Big[
\Delta_{p, n, i}\log f_{p, n}(T_{p, n, i} \mid D_{p, n, i},X_{p, n, i};\eta)\\
&\qquad\qquad+(1-\Delta_{p, n, i})\log S_{p, n}(T_{p, n, i} \mid D_{p, n, i},X_{p, n, i};\eta)
\Big].
\end{align*}
 Note that the $\psi$-terms and $\eta$-terms nicely decompose. As such, maximizing over $\eta$-terms only is equivalent to maximizing $\ell(\eta, \psi)$ with observed data.

\paragraph{Estimation Procedure.}
We now formally state the estimation procedure for $\theta_n(t) = h^{(0)}_{1, n}(t)$ for all $t$ in some fixed time horizon $[0, \widetilde{\tau}]$ for a specific unit $n$. We introduce the usage of the Cox proportional hazards model and present its consistency result in Section \ref{sec:ob_consistency}. Note that we are utilizing known results in literature for Cox proportional hazards model. In its place,
other methods from literature can be utilized in its place and prior results from literature may provide similar guarantees.

(1) Step 1: Fit a Cox proportional hazards model
\[
h_{1, n}(t \mid D_{1, n},X_{1, n})
=
\widetilde h(t)\exp(\beta^\top X_{1, n} + \gamma D_{1, n}),
\]
obtaining $(\widehat\beta, \widehat {\widetilde \Lambda}(t) := \int_{0}^t \widehat{\widetilde h}(s)ds)$ via the Breslow estimator \citep{breslow1975}.

(2) Step 2: Predict potential conditional hazard and survival for each individual by setting the treatment to be the control.
Given $n$ and indices of observed data $i = 1,\dots, K_{1, n}$, regardless of their observed treatment,
\begin{align}
\widehat h_{1, n, i}^{(0)}(t)
&=\widehat{\widetilde{h}}(t)  \exp(\widehat\beta^{\top} X_{1, n, i}) \text{,
where $\widehat {\widetilde{h}}(t)$ is a derivative of $\widehat {\widetilde \Lambda}_{0,d}(t)$,}
\\
    \widehat S_{1, n, i}^{(0)}(t)
&=
\exp\!\left(
    -\widehat {\widetilde \Lambda}(t) \cdot \exp(\widehat\beta^\top X_{1, n, i})
\right).
\end{align}

(3) Step 3: Plug-in.
\[
\widehat h^{(0)}_{1, n}(t)
=
\frac{
\sum_{i}
    \widehat h_{1, n, i}^{(0)}(t) \widehat S_{1, n, i}^{(0)}(t)
}{
\sum_{i}
    \widehat S_{1, n, i}^{(0)}(t)
}.
\]

\subsubsection{Consistency Result} \label{sec:ob_consistency}
Our estimator for the marginal potential hazard $h_{1,n}^{(0)}(t)$ builds on
the Cox proportional hazards model introduced in Section~\ref{sec:ob_estimation}.
Under standard regularity assumptions and independent censoring,
the Cox partial-likelihood estimator $\widehat{\beta}$ is asymptotically
consistent and asymptotically normal, and the Breslow estimator of the
baseline cumulative hazard $\widehat{\widetilde \Lambda}(t)$ converges uniformly to
$\widetilde \Lambda(t)$ on compact time intervals
\citep{cox1972regression, Tsiatis1981}.
Consequently,
\[
\widehat{S}_{1, n}(t \mid D=0, X_{1,n,i})
=
\exp\!\left(
 -\widehat{\widetilde \Lambda}(t)\exp(\widehat{\beta}^\top X_{1,n,i})
\right)
\xrightarrow{p}
S_{1, n}(t \mid D=0,X_{1,n,i})
\] uniformly in $t \in [0, \widetilde{\tau}]$.
Thus, our estimator of the marginal
counterfactual hazard,
\[
\widehat{h}^{(0)}_{1,n}(t)
=
\frac{
  \sum_i \widehat{h}_{1, n}(t\mid D=0,X_{1,n,i}) \widehat{S}_{1, n}(t\mid D=0,X_{1,n,i})
}{
  \sum_i \widehat{S}_{1, n}(t\mid D=0,X_{1, n,i})
},
\]
inherits consistency by the continuous mapping theorem.

\subsection{Case 2: Unobserved Confounding}\label{sec:unobserved}

In practice, only a subset of the covariates may be observed. The unobserved components confound both treatment assignment and outcomes, making direct identification of causal survival functions challenging, if not infeasible. Moreover, Assumption \ref{ass:positivity} may be violated due to limited availability of certain treatment. These motivate need for additional structure for our setting with unobserved confounding. 

We notice that persistent latent characteristics of units (e.g., country-level differences in healthcare infrastructure, patient mix, or access to medical technology) influence both treatment assignment and outcomes. Similarly, there are persistent latent characteristics of period (e.g., second-line patients have specific biological, disease-specific conditions that are persistent across countries). These characteristics or factors can be considered to be stable over the duration of the study. This suggests {\em factorization} of characteristics as $X_{p, n} = (V_n, U_p)$ where $V_n$ 
represents unit characteristics  or factors and $U_p$ represent period characteristics or factors. We shall consider $V_n$ and $U_p$  to be latent or unobserved. As we discussed near the end of the manuscript, this framework has potential to extend to additional observed covariates and is left for future work.

Because each unit can only adopt one treatment per period, at most one of the potential hazard functions is observed. Estimating counterfactual survival trajectories is therefore equivalent to imputing missing hazards across the unit-period-treatment tuples, which is illustrated in Figure~\ref{fig:observed_tensor}.

\begin{figure}[htbp]
    \centering
    \includegraphics[width=0.65\linewidth]{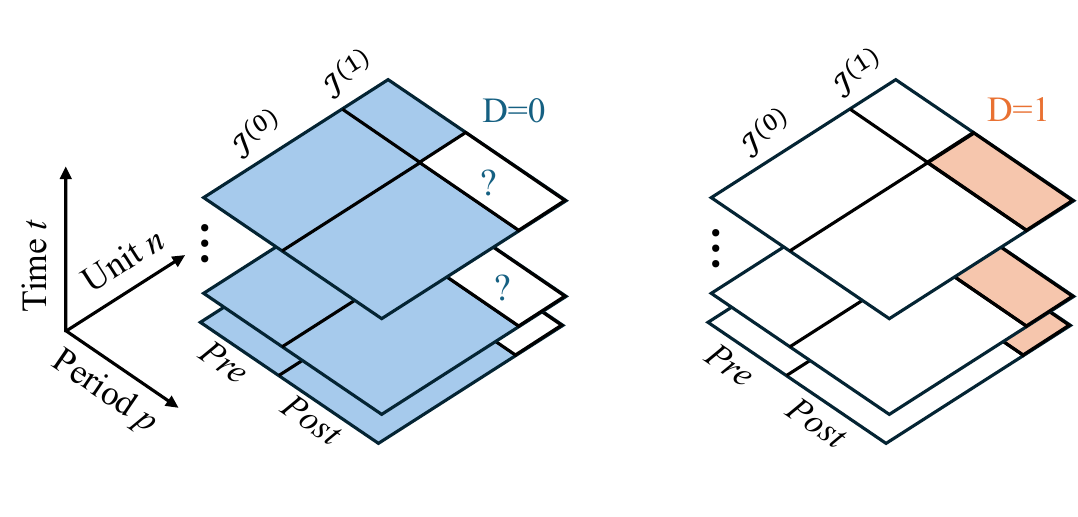}
    \caption{Visualization of the setup. $\mathcal{I}^{(0)}$and $\mathcal{I}^{(1)}$ denotes the control group and treatment group respectively. The shaded area represents the observed data while the question mark region denotes our target.} 
    \label{fig:observed_tensor}
\end{figure}

\paragraph{Factorization of Survival Function.} The distribution of time-to-event $\tau_{n,p}$, i.e. the survival function, 
depends on $X_{p,n}$. We imposed restriction that $X_{n, p}$ separates or factorizes into unit specific component $V_n$ and period
specific component $U_p$. More precisely, we impose the following {\em low-rank} factorization structure on survival function that
factorizes into components that unit specific and period specific. We state the formal assumption first and then provide its 
justification. 
\begin{assumption}[Survival Factor Model]\label{assump:factor_model}
For all $p\in[P]$, $n\in[N]$
and $t\in[0,\widetilde{\tau}]$, the potential survival function under control
$
S^{(0)}_{p,n}(t) = \langle u_p(t),  v_n\rangle,
$
where $u_p: [0,\widetilde{\tau}] \to \mathbb{R}^r$ are (latent) period factors and $v_n\in\mathbb{R}^r$ are (latent) unit factors
for some $r \geq 1$.
\end{assumption}
To understand why Assumption \ref{assump:factor_model} follows naturally once we impose structural separation of $X_{n, p}$ into 
unit specific component $V_n$ and period specific component $U_p$, let us recall the Cox model. In such a model, we have that
logarithm of hazard function having form
\begin{align}
\log h_{p, n}(t\mid D=d, X_{p, n} =(V_n, U_p)) & = \log \widetilde h(t) + \beta^\top X_{p,n} + \gamma d, 
\end{align}
where $\widetilde h(\cdot)$ is a baseline hazard, $\beta$ being model parameter, $\gamma$ capturing treatment effect. 
Writing $X_{p,n} = [V_n   U_p]$ as a vector with corresponding element-wise split of $\beta = [\beta_1   \beta_2]$, we 
have 
\begin{align}
\log h^{(d)}_{p, n}(t) & = \log h_{p, n}(t\mid D=d, X_{p, n} =(V_n, U_p)) \nonumber \\
& = \log \widetilde h(t) + \beta_1^\top V_n + \beta_2^\top U_p + \gamma d. \text{\footnotemark}
\end{align}
\footnotetext{This is similar to two-way fixed effect in econometrics literature.}
Denote $a_n = \exp(\beta_1^\top V_n)$, $b_p = \exp(\beta_2^\top V_p)$, and using the fact that survival function is exponential of negative
integral of hazard function, we obtain (for $d = 0$)
\begin{align}
S^{(0)}_{p, n}(t) & = \exp\Big(-\int_{0}^t h^{(0)}_{p, n}(s) ds\Big) \nonumber \\
& = \exp\Big(-\int_{0}^t \widetilde h(s) \exp\big( \beta_1^\top V_n + \beta_2^\top U_p \big)ds\Big) \nonumber \\
& = \exp\Big(a_n \big(-\int_{0}^t b_p \widetilde h(s)ds\big)\Big) \nonumber \\
& = \exp\Big(a_n \tilde b_p(t) \Big), \label{eq:nonlin.bilin}
\end{align}
where $\tilde b_p(t) = - \int_{0}^t b_p \widetilde h(s)ds$. 

We consider the following elementary fact (see Appendix \ref{appendix:proof_approx} for a proof):
\begin{lemma}\label{lem:approx}
Given $\forall x, |x| \leq B$ and $\varepsilon \in (0,1)$,
\begin{align}
    \Big| e^x - \sum_{k=0}^{r-1} \frac{x^k}{k!}\Big| & \leq \varepsilon, 
\end{align}
as long as $r \geq 5 \max(B, \ln \frac{1}{\varepsilon})$. 
\end{lemma}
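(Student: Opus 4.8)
The plan is to bound the Taylor remainder of the exponential directly, splitting the argument according to whether $r$ is "large enough" relative to $|x|$ versus relative to $\ln(1/\varepsilon)$. I would start from the standard Lagrange (or integral) form of the remainder: for any $x$ with $|x|\le B$,
\[
\Big| e^x - \sum_{k=0}^{r-1}\frac{x^k}{k!}\Big| \;=\; \Big|\frac{e^{\xi} x^r}{r!}\Big| \;\le\; e^{B}\,\frac{B^r}{r!}
\]
for some $\xi$ between $0$ and $x$. So it suffices to show $e^{B} B^r / r! \le \varepsilon$ whenever $r \ge 5\max(B,\ln\frac1\varepsilon)$.

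The key step is a clean upper bound on $B^r/r!$. Using the elementary inequality $r! \ge (r/e)^r$, we get $B^r/r! \le (eB/r)^r$. Since $r \ge 5B$, we have $eB/r \le e/5 < 1/\alpha$ for a fixed constant $\alpha>1$ (one can take $\alpha = 5/e$), so $(eB/r)^r \le \alpha^{-r}$. Then $e^B B^r/r! \le e^B \alpha^{-r}$. Because $r \ge 5B$, the term $e^B$ is dominated: $e^B \le e^{r/5}$, so $e^B\alpha^{-r} \le e^{r/5}\alpha^{-r} = (e^{1/5}/\alpha)^r$; with $\alpha = 5/e$ one checks $e^{1/5}/\alpha = e^{6/5}/5 < 1$, so this is at most $\beta^{-r}$ for another fixed $\beta>1$. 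Finally, since $r \ge 5\ln(1/\varepsilon)$, we get $\beta^{-r} = e^{-r\ln\beta} \le e^{-5\ln\beta\,\ln(1/\varepsilon)} \le \varepsilon$ provided $5\ln\beta \ge 1$, i.e. $\ln\beta \ge 1/5$; one verifies $\beta = 5/e^{6/5}$ indeed satisfies $\ln\beta = \ln 5 - 6/5 \approx 1.609 - 1.2 = 0.409 \ge 1/5$, completing the chain. (A slightly looser constant than $5$ would also work; the factor $5$ is comfortably sufficient.)

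The main obstacle is purely bookkeeping with the constants: one has to choose the base $\beta$ so that simultaneously (i) $eB/r$ is bounded away from $1$ using $r\ge 5B$, (ii) the stray $e^B$ factor is absorbed, also using $r \ge 5B$, and (iii) the resulting geometric decay $\beta^{-r}$ beats $\varepsilon$ using $r \ge 5\ln(1/\varepsilon)$. Each of these is a one-line numerical check, so there is no real analytic difficulty — the only care needed is to make sure the single constant $5$ in the hypothesis is large enough to serve all three roles at once, which the computation above confirms. If one wanted to avoid the $r!\ge(r/e)^r$ bound, an equivalent route is to note $\sum_{k\ge r} B^k/k! \le (B^r/r!)\sum_{j\ge 0}(B/(r+1))^j \le 2B^r/r!$ when $r\ge B$, and then proceed identically; I would present whichever is shorter.
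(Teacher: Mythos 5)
Your argument is correct and is essentially the paper's proof: both use the Lagrange remainder together with the Stirling bound $r!\ge(r/e)^r$ to reduce the claim to $e^B(eB/r)^r\le\varepsilon$, and both verify this via the same numerical constant $5\ln 5-6>1$; you merely combine the two hypotheses $r\ge 5B$ and $r\ge 5\ln(1/\varepsilon)$ in a single chain where the paper splits into cases according to which of $B$ and $\ln(1/\varepsilon)$ is larger. (One small caution: the parenthetical alternative via the geometric-series tail picks up a factor of $2$ that is not absorbed when $\varepsilon$ is close to $1$, so the main Lagrange-remainder argument is the one to present.)
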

This suggests that
\begin{align}
S^{(0)}_{p, n}(t) & \approx \sum_{k=0}^{r-1} \frac{1}{k!} a_n^k \tilde b_p^k(t).
\end{align}
By writing $v_n = [a_n^k: 0\leq k\leq r-1] \in \mathbb{R}^r$ and $u_p(t) = [\frac{1}{k!}\tilde b_p^k(t): 0\leq k \leq r-1]$
with $u_p: \mathbb{R}_{\geq 0} \to \mathbb{R}^r$, we obtain
\begin{align}\label{eq:approx.lr}
S^{(0)}_{p, n}(t) & \approx \langle u_p(t), v_n\rangle.
\end{align}
That is, Assumption \ref{assump:factor_model} is (approximately) satisfied. More generally, for a number of parametric models, it can be checked
that $S^{(0)}_{p,n}(t)$ is a smooth nonlinear transformation of a bilinear form similar to \eqref{eq:nonlin.bilin}. For such settings,
like Lemma \ref{lem:approx}, this would result into approximate low-rank factorization like \eqref{eq:approx.lr} (also see Proposition~1,  \citet{abadie2025causalinferenceframeworkdata}). This provides natural justification for Assumption \ref{assump:factor_model}.

\paragraph{Selection on Latent Factors.}
Analogous to the unconfoundedness assumption in classical causal inference, we require that once latent factors are conditioned upon, treatment assignment is independent of potential hazards:

\begin{assumption}[Selection on Latent Factors]\label{assump:selection_latent}
\[
\mathcal{S} \perp \mathcal{D} \ \mid \ \mathcal{LF},
\]
where $\mathcal{S} = \{S_{p, n}^{(d)}(t)\}_{p, n, d, t}$, realized treatment assignments $\mathcal{D} = \{(p,n): D_{p,n} \text{ observed}\}$, $\mathcal{LF} = \{u_p(\cdot), v_n\}_{p,n}.$
\end{assumption}

Together, Assumptions \ref{assump:factor_model} and \ref{assump:selection_latent} provide the structural basis for imputing missing entries of $\mathcal{S}$ and thereby estimating counterfactual hazard trajectories.

\paragraph{Causal Estimand.}
Building upon the definition in Section \ref{sec:estimand}, our target causal estimand is $
    \theta_{n}(\cdot) = S^{(0)}_{1, n}(\cdot),%
    $ for a given treated unit $n \in \mathcal{I}^{(1)}$. Thus 
it captures the potential transformed hazard trajectory for an treated unit $n \in \mathcal{I}^{(1)}$ under control during the post-treatment period, conditional on latent factors. %
    Returning to our motivating example, the estimand corresponds to the true transformed hazard trajectory from second-line treatment onward had the US continued using CC instead of adopting novel SA.

\subsubsection{Identification}
We now turn to the identification question: under what assumptions can we recover the unobserved counterfactual trajectories $\theta_{n}(\cdot)$ for treated units $n\in \mathcal{I}^{(1)}$?

\begin{assumption}[Linear Span Inclusion]
For each treated unit $n \in \mathcal{I}^{(1)}$, there exists weights $w_{n, m} \in \mathbb{R}^{|\mathcal{I}^{(0)}|}$ such that $v_n = \sum_{m \in \mathcal{I}^{(0)}} w_{n,m} \cdot v_m$.
\label{assump:linearity_span}
\end{assumption}

Assumption \ref{assump:linearity_span} states that within the space of latent unit factors, the treated unit $n$ can be represented as a weighted average of control during the post-intervention period and such weights are period- and time-invariant. 

It is a direct relaxation of the ``convex span'' condition in synthetic control and utilized in prior works such as synthetic interventions. It ensures that counterfactual outcomes for experimental units can be approximated as weighted averages of control group outcomes. In the context of our motivating example, Assumption \ref{assump:linearity_span} requires USA's latent factors to be expressed as a weighted sum of the remaining countries' latent factors in the post-intervention period.

It is worth noting that Assumption~\ref{assump:factor_model} implies that all unit-specific latent factors 
\(\{v_n\}_{n=1}^N \subset \mathbb{R}^r\) 
lie in an \(r\)-dimensional subspace for some fixed \(r\). 
Let \(\mathcal{V}_i = \mathrm{span}\{v_1, \ldots, v_i\}\) denote the subspace generated by the first \(i\) vectors. 
Then the sequence \((\dim \mathcal{V}_i)_{i=1}^N\) is nondecreasing and bounded above by \(r\). 
Hence, there exist at most \(r\) indices \(i_1, \ldots, i_r \in [N]\) such that $
\dim(\mathcal{V}_{i_j}) = \dim(\mathcal{V}_{i_j-1}) + 1,$
i.e., vectors \(v_{i_j}\) that introduce new linearly independent directions. 
Equivalently, all but at most \(r\) of the vectors \(\{v_n\}_{n=1}^N\) lie in the span of their predecessors. Thus, with $N$ large enough, Assumption \ref{assump:linearity_span} should be satisfied for almost all (with exception of $r$ units in the worst case). 

We now state the identification result.

\begin{theorem}\label{theorem:theorem1}
    Given the assumptions above, we have 
    \begin{align}
         S^{(0)}_{1, n}(\cdot) =  \sum_{m \in \mathcal{I}^{(0)}}  w_{n, m} S^{0}_{1, m}(\cdot).
    \end{align}
    where $h^{(d)}_{p, m}(\cdot)$ and $h^d_{p, m}(\cdot)$ are the counterfactual and factual hazard function of unit $m$ in period $p$ under treatment $d$.
\end{theorem}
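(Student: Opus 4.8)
The plan is to treat the claim as essentially an algebraic identity that follows from the bilinear factorization of Assumption~\ref{assump:factor_model} once the span condition of Assumption~\ref{assump:linearity_span} is invoked, and then to verify that the control-unit terms appearing on the right-hand side---written as factual survival curves $S^{0}_{1,m}(\cdot)$---genuinely equal the matching potential-outcome curves.

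First I would note that Assumption~\ref{assump:factor_model} applies to every unit index, whether treated or control: for all $m\in[N]$ and $t\in[0,\widetilde{\tau}]$ we have $S^{(0)}_{1,m}(t)=\langle u_1(t), v_m\rangle$, with $u_1(\cdot)$ the period-$1$ latent factor. For treated units this is an identity among (partly unobserved) potential quantities, but it is still well-defined. Next I would apply Assumption~\ref{assump:linearity_span} to the treated unit $n\in\mathcal{I}^{(1)}$, which supplies period- and time-invariant weights $\{w_{n,m}\}_{m\in\mathcal{I}^{(0)}}$ with $v_n=\sum_{m\in\mathcal{I}^{(0)}}w_{n,m}v_m$. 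Substituting and using linearity of $\langle\cdot,\cdot\rangle$ in its second argument gives, for every $t\in[0,\widetilde{\tau}]$,
\[
S^{(0)}_{1,n}(t)=\Big\langle u_1(t),\ \sum_{m\in\mathcal{I}^{(0)}}w_{n,m}v_m\Big\rangle=\sum_{m\in\mathcal{I}^{(0)}}w_{n,m}\langle u_1(t),v_m\rangle=\sum_{m\in\mathcal{I}^{(0)}}w_{n,m}S^{(0)}_{1,m}(t).
\]

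The remaining step is to replace $S^{(0)}_{1,m}(\cdot)$ by the factual curve $S^{0}_{1,m}(\cdot)$ for $m\in\mathcal{I}^{(0)}$. Because control units have $D_{1,m}=0$, the consistency clause of Assumption~\ref{ass:sutva} identifies $\tau_{1,m}$ and $C_{1,m}$ with their control potential versions, so the factual and counterfactual-under-control survival functions coincide; Assumption~\ref{ass:censoring} ensures this curve is exactly what is recovered from the right-censored observations, and Assumption~\ref{assump:selection_latent} ensures that conditioning on the realized assignment to control does not perturb the distribution of the potential survival functions given the latent factors. Chaining these with the display above delivers $S^{(0)}_{1,n}(\cdot)=\sum_{m\in\mathcal{I}^{(0)}}w_{n,m}S^{0}_{1,m}(\cdot)$, which is the claim.

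I expect the main obstacle to be conceptual rather than computational: making precise why a factorization stated for potential outcomes can be transferred to the observed control units without selection bias. This is where Assumptions~\ref{ass:sutva}, \ref{ass:censoring}, and \ref{assump:selection_latent} must be used in concert---consistency to identify the potential curve with the factual one for control units, non-informative censoring so the factual curve is itself estimable, and selection-on-latent-factors so that the subpopulation $\mathcal{I}^{(0)}$ is representative conditional on $\mathcal{LF}$. The algebraic heart---bilinearity plus the span condition---is immediate once the factorization is granted for all units, and because Assumption~\ref{assump:linearity_span} already postulates time- and period-invariant weights, no limiting or approximation argument (unlike the low-rank derivation via Lemma~\ref{lem:approx}) is needed for the identification statement itself.
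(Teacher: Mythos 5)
Your proposal is correct and follows essentially the same route as the paper's proof: apply the factor model $S^{(0)}_{1,m}(t)=\langle u_1(t),v_m\rangle$ to all units, substitute the span representation $v_n=\sum_{m\in\mathcal{I}^{(0)}}w_{n,m}v_m$, and use bilinearity. Your additional care in justifying the replacement of the potential curve $S^{(0)}_{1,m}$ by the factual curve $S^{0}_{1,m}$ for control units (via SUTVA consistency and selection on latent factors) is a step the paper leaves implicit, and it is a sensible addition rather than a deviation.
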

See proof in Appendix \ref{appendix:theorem1}. This means the counterfactual post-treatment transformed hazard for treated unit $n$ under control is identified by a linear combination of post-treatment transformed hazards from control units, using weights determined by pre-treatment similarity in latent space.

\subsubsection{Synthetic Survival Control (SSC) Estimator}\label{ssec:ssc}

\paragraph{Data.}
In period \(p\in\{0,1\}\), we observe survival data for each unit \(n\in[N]\) consisting of \(K_{p, n}\) independent observations. For observation \(i=1,\dots,K_{p, n}\), they contribute the assigned treatment $D_{p,n,i}$ and censored time‐to‐event observation \((T_{p,n,i}, \Delta_{p,n,i})\). %

\paragraph{Estimation Procedure.}
Consider a treated unit $n \in \mathcal{I}^{(1)}$ of interest. Let $\mathcal I^{(0)}$ be $N_0$ control donor units and $\mathcal I^{(1)}$ the treated unit(s).

\medskip
(1) Time-to-event estimation.
For every donor $m\in\mathcal I^{(0)}$ and both periods $p\in\{0,1\}$, and for the target treated unit $n$ in the pre–period $p=0$, fit a Kaplan–Meier estimator to obtain
\[
\widehat{S}^0_{p,l}(t),\quad t\in[0,\widetilde\tau],\qquad
l\in\{n\}\cup\mathcal I^{(0)}.
\]
This inherently handles censoring in patient-level data $\{T_{p,n,i}, \Delta_{p,n,i}\}$ to estimate the underlying hazard functions for each unit-period combination. The accuracy of these estimates depend on the number of observations $K_{p,n}$ within each unit-period. See Section \ref{sec:KM} for more details of time-to-event estimation with Kaplan-Meier empirically.

\noindent (2) Common grid subsampling.
Choose a finite grid $\mathcal T=\{t_1,\ldots,t_{T_0}\}\subset[0,\widetilde\tau]$ and subsample from the estimated survival functions to form vectors for all $l\in\{n\}\cup\mathcal I^{(0)}$, 
\[
\widehat{S}_{p,l} := \big(\widehat{S}_{p,l}(t_1),\ldots,\widehat{S}_{p,l}(t_{T_0})\big)^\top .
\]

\noindent (3) Weights from the pre–period. Stack all control units’ pre–period columns into the $T_0\times N_0$ matrix
$\widehat{S}_{0,\mathcal I^{(0)}}:=[\widehat{S}_{0,m}]_{m\in\mathcal I^{(0)}}$ and the treated unit’s pre vector $\widehat{S}_{0,n}(\cdot)$. 
We wish to utilize the Principal Component Regression (PCR) algorithm (cf. \citet{agarwal2024syntheticinterventions}) to learn weights. 
Specifically, compute the top-$r_0$ singular value decomposition of $\widehat{S}_{0,\mathcal I^{(0)}}$ (with $r_0$ chosen by a gap rule, elbow, or cross-validation as explained in \citet{agarwal2024syntheticinterventions}):
\[
\widehat{S}_{0,\mathcal I^{(0)}} = \sum_{i=1}^{r_0} \widehat{s}_i \widehat{u}_i \widehat{v}_i^\top,
\qquad
\widehat{V}_0 := [\widehat{v}_1,\ldots,\widehat{v}_{r_0}] \in \mathbb{R}^{N_0 \times r_0}.
\]

We estimate weights
$\widehat{w}$ by projecting onto the rank-$r_0$ space and imposing the constraint:
\[
\widehat{w}
 \in 
\arg\min_{w\in\mathbb R^{N_0}}
\big\|
\widehat{S}_{0,n} - \widehat{S}_{0,\mathcal I^{(0)}} w
\big\|_2^2
\quad\text{s.t.}\quad
(I - \widehat V_0\widehat V^{\top}_0)w = 0,
\] which has a closed form solution $\widehat{w}
=
\left(
\sum_{i=1}^{r_0}
1/{\widehat{s}_{i}} 
\widehat{v}_{i} \widehat{u}_{i}^{\top}
\right)
\widehat{S}^0_{0,n}.$

\medskip
\noindent {(4) Post--period extrapolation and inverse transform.}
Form the counterfactual transformed trajectory for the experimental unit $n$ in the
post period:
\[
\widehat{S}^{(0)}_{1,n}(t)
=
\sum_{m\in\mathcal I^{(0)}} \widehat{w}_{n,m} \widehat{S}^0_{1,m}(t),
\qquad t\in[0,\widetilde\tau].
\]

In the context of TCL example, the estimator learns a set of weights that minimizes pre-intervention discrepancy between the US and a weighted combination of countries that maintained CC throughout the study period after pre-processesing $\widehat{S}_{0,\mathcal I^{(0)}}$ by obtaining its rank-$r_0$ approximation. Then this weighted average forms our synthetic USA. Accordingly, the estimator applies the learned weights to the post-intervention outcomes of the control countries to estimate the counterfactual hazards trajectory of the US under continued CC after drug offering shift. 

Note that in this work, we introduce the use of the Kaplan-Meier estimator for the first step time-to-event estimation. The subsequent finite-sample consistency guarantee is also focused on this approach. Other time-to-event estimators can be applied as well and their finite-sample consistency property analyses are left for future works.

\subsubsection{Finite Sample Consistency}
\label{sec:theory}

We analyze the consistency of our SSC estimator. 

First, we state additional assumptions for such analysis. Let \( \mathcal{S}_{0, \mathcal{I}^{(0)}} \in \mathbb{R}^{T_0 \times N_0} \) be the pre-intervention matrix for units remaining under control, where $N_0=|\mathcal{I}^{(0)}|$ is the number of control units and $T_0$ is the number of evaluation timestamps in $ \mathcal{T} \subseteq [0, \widetilde{\tau}]$. We also note that the survival functions $S_{p,n}^{(0)}(\cdot)$ are naturally bounded between 0 and 1.

\begin{assumption}[Well-conditioned Spectrum]
\label{ass:spectrum}
The expectation of the control group pre-treatment matrix satisfies:
$
\text{cond}(\mathcal{S}_{0, \mathcal{I}^{(0)}}) \leq \xi', \text{ and } \| \mathcal{S}_{0, \mathcal{I}^{(0)}} \|_F^2 \geq \xi'' N_0T_0
$ for some constants $\xi', \xi''>0$, where $\|\cdot\|_{F}$ is the Frobenius norm.
\end{assumption}

\begin{assumption}[Latent time-period span]
\label{ass:span}
We assume the post-period specific factors lie in the span of pre-period specific factors, i.e., $\forall t \in  \mathcal{T}, u_{1}(t) \in \text{span}(\left\{ u_{0}(s): s \in  \mathcal{T}\right\})$ conditional on $\{\mathcal{LF}, \mathcal{D}\}$.
\end{assumption}

We now state the main result characterizing the consistency of our estimator.

\begin{theorem}[Consistency of Counterfactual Transformed Hazard Estimation]
\label{thm:consistency}
Fix a treated unit \( n \in \mathcal{I}^{(1)} \).
Let \(  \widehat{\theta}_n(t):= \widehat S^{(0)}_{1, n}(t) \) denote the survival synthetic control estimate of the counterfactual transformed hazard \( \theta_n(t):= S^{(0)}_{1, n}(t)\), $K=\min_{p, n}\{K_{p, n}\}$ be the minimum number of observations per unit and period across all units and periods, $T_0$ be the number of evaluation timestamps in $\mathcal{T} \subseteq [0, \widetilde{\tau}]$, and $r_0$ be defined in the estimtaion procedure. 
Then, if $K> c\max\{T_0, N_0\}(\log N_0 + \log \max\{T_0, N_0\})$ for some constant $c$,
\[
\sup_{t \in \mathcal{T}}\big|\widehat\theta_n(t)-\theta_n(t)\big|
 = 
\mathcal O_p\!\Bigg( 
r_{0}^{3/4} T_0^{-1/4}
 + 
r_{0}^{2} 
\max\!\Big\{N_0^{1/2}T_0^{-3/2}, T_0^{-1/2}, N_0^{-1/2}\Big\}
\Bigg).
\]
\end{theorem}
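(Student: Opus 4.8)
The plan is to decompose the estimation error into two pieces: (i) the error from subsampling the true (population) survival curves onto the finite grid $\mathcal{T}$ and applying PCR with the \emph{true} curves — call this the ``deterministic'' or approximation error — and (ii) the error from replacing the true curves $S^0_{p,l}(\cdot)$ by their Kaplan–Meier estimates $\widehat{S}^0_{p,l}(\cdot)$ — the ``statistical'' error. First I would set up notation: write $\theta_n(t)=S^{(0)}_{1,n}(t)=\sum_{m\in\mathcal{I}^{(0)}} w_{n,m} S^0_{1,m}(t)$ exactly (Theorem~\ref{theorem:theorem1}), and note that under Assumptions~\ref{ass:span} and \ref{assump:factor_model} the post-period grid vector $S_{1,n}$ lies in the column span of the pre-period matrix $\mathcal{S}_{0,\mathcal{I}^{(0)}}$, so the true weights lie in the rank-$r_0$ row space that PCR targets. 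The triangle inequality then gives $\sup_t|\widehat\theta_n(t)-\theta_n(t)| \le \underbrace{\|\widehat{S}_{1,\mathcal{I}^{(0)}}(\widehat w - w)\|_\infty}_{\text{weight error}} + \underbrace{\|(\widehat{S}_{1,\mathcal{I}^{(0)}}-S_{1,\mathcal{I}^{(0)}})w\|_\infty}_{\text{KM error on post period}}$, up to the in-span representation error.

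Next I would control the Kaplan–Meier error. Since each $\widehat{S}^0_{p,l}(t)$ is built from $K_{p,l}\ge K$ i.i.d. (possibly censored) observations, the classical Kaplan–Meier concentration (e.g. via the Dvoretzky–Kiefer–Wolfowitz-type bound or Greenwood/martingale bounds) gives $\sup_{t\in[0,\widetilde\tau]}|\widehat{S}^0_{p,l}(t)-S^0_{p,l}(t)| = \mathcal{O}_p(K^{-1/2})$, uniformly. Stacking over the $N_0$ control units and the $T_0$ grid points, this yields an operator-norm (or Frobenius-norm) bound on the noise matrix $E_0:=\widehat{S}_{0,\mathcal{I}^{(0)}}-\mathcal{S}_{0,\mathcal{I}^{(0)}}$ of order $\sqrt{N_0 T_0}\cdot K^{-1/2}$ (entrywise $K^{-1/2}$), which under the sample-size condition $K> c\max\{T_0,N_0\}(\log N_0+\log\max\{T_0,N_0\})$ is small relative to the signal strength $\|\mathcal{S}_{0,\mathcal{I}^{(0)}}\|_F^2\ge\xi'' N_0 T_0$ from Assumption~\ref{ass:spectrum}. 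Then I would invoke the PCR error analysis of \citet{agarwal2024syntheticinterventions}: given a well-conditioned spectrum (Assumption~\ref{ass:spectrum}), the subspace estimated from $\widehat{S}_{0,\mathcal{I}^{(0)}}$ is close to the true row space (Wedin/Davis–Kahan), so the learned weights satisfy a bound of the form $\|\widehat w - w\|_2 = \mathcal{O}_p(r_0\cdot(\text{noise-to-signal}))$, and the prediction error on any vector in the span is $\mathcal{O}_p\big(r_0^{3/4}T_0^{-1/4} + r_0^2\max\{N_0^{1/2}T_0^{-3/2}, T_0^{-1/2}, N_0^{-1/2}\}\big)$ once the $K^{-1/2}$ factors are absorbed into the stated rate (the $T_0^{-1/4}$ term is the generalization/representation error from finite grid size, the remaining terms are the parameter-estimation error scaled by $r_0$ and the conditioning constants). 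Combining the weight-error term with the post-period KM-error term $\|E_1 w\|_\infty \le \|w\|_1\cdot\mathcal{O}_p(K^{-1/2})$ and bounding $\|w\|_1$ via Assumption~\ref{ass:spectrum} closes the argument.

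I expect the main obstacle to be two-fold. First, the Kaplan–Meier estimator is \emph{not} a simple empirical mean: its error involves a product-limit form whose fluctuations are only asymptotically Gaussian, so I must be careful to get a genuinely finite-sample, uniform-in-$t$ concentration bound that plugs cleanly into the matrix-perturbation machinery — this requires controlling the at-risk process $\sum_i \mathbb{I}\{T_{p,n,i}\ge t\}$ away from zero on $[0,\widetilde\tau]$, which is where the horizon restriction $\widetilde\tau$ and a lower bound on the censoring survival function are essential (and this is implicitly what Assumption~\ref{ass:censoring} and the finite-horizon restriction buy us). Second, the PCR analysis from \citet{agarwal2024syntheticinterventions} is stated for a matrix with independent sub-Gaussian noise, but here the noise in a given \emph{column} (the $T_0$ grid evaluations of one unit's KM curve) is highly correlated across rows; I would need to either re-derive the relevant spectral perturbation bounds treating each column as a whole (a $T_0$-dimensional sub-Gaussian-with-bounded-norm vector, legitimate since survival curves are bounded in $[0,1]$), or argue that the column-wise correlation does not inflate the operator-norm bound beyond the stated rate. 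Handling this correlation correctly — rather than naively citing an i.i.d.-entries result — is the crux of making the finite-sample guarantee rigorous.
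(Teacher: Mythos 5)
Your proposal is correct and follows essentially the same route as the paper: a decomposition into a weight-estimation term and Kaplan--Meier noise terms, a uniform finite-sample KM concentration bound (the paper uses the F\"{o}ldes--Rejt\H{o} inequality, which relies exactly on the censoring-survival lower bound and finite horizon you identify), an entrywise-to-Frobenius-to-operator-norm bound on the noise matrix, and a PCR stability argument in the style of \citet{agarwal2024syntheticinterventions} combined with the row-space inclusion from Assumption~\ref{ass:span}. The two obstacles you flag are precisely the ones the paper addresses, and it resolves the second (column-wise correlated, non-sub-Gaussian noise) via the simpler of your two suggested fixes --- re-deriving the PCR stability lemma for arbitrary noise satisfying only an entrywise bound $\|E\|_{\max}\le 1/\max\{\sqrt{T_0},\sqrt{N_0}\}$, with the operator norm controlled crudely by the Frobenius norm --- while the leading term $\langle S^0_{1,\mathcal{I}^{(0)}}(t),\widehat w-\widetilde w\rangle$ is sharpened by first projecting $\widehat w-\widetilde w$ onto the pre-period row space (using the minimum-norm solution $\widetilde w$ and $S^0_{1,\mathcal{I}^{(0)}}\mathcal P_{V_0}=S^0_{1,\mathcal{I}^{(0)}}$), a refinement needed to reach the stated rate that your sketch leaves implicit.
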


\noindent See proof in Appendix \ref{sec:proof_thm2}. 

\section{Synthetic Validation}\label{ssec:synthetic}

To begin with, we consider a simulation setting where we generate observations as per the data generating model introduced in this work. As argued, the SSC method should be able to provide a consistent estimator for such a setting, and we can verify it given the prior knowledge of the data generating process. As we do so, we shall aim to validate a few aspects. First, SSC method is able to provide consistent estimation across a number of popular parametric models from survival analysis literature. Since the SSC method is agnostic to the underlying data generating mechanism, this would suggest robustness of SSC across different types of data characteristics. Second, understanding the role of different aspects of the data on method performance. While theoretical results provide performance guarantees, such data-driven analysis helps set guidelines for using SSC in empirical studies.

\paragraph{Simulation Setup.} 

In each simulation, we consider $N=20$ units in the same panel setup introduced in Section~\ref{sec:method}, with one unit designated as the treated unit that switches from the control treatment $D=0$ in the pre-period $p=0$ to the intervention treatment $D=1$ in the post-period $p=1$, and all other units remaining under control ($D=0$) in both periods. For each period--unit pair $(p, n)$ we generate $K_{p,n}\in\{100,300,700\}$ independent samples of right–censored time–to–event data
\[
(T_{p,n,i},\Delta_{p,n,i})
=
\big(\min\{\tau_{p,n,i},C_{p,n,i}\}, \mathbf{1}\{\tau_{p,n,i}\le C_{p,n,i}\}\big),
\]
where censoring times $C_{p,n,i}\sim\mathrm{Exp}(\nu)$ are independent of $\tau_{p,n,i}$ conditional on $\big(D_{p,n},U_p,V_n\big)$, satisfying the independent censoring Assumption~\ref{ass:censoring}. We set the censoring rate to be below $10\%$ by choosing $\nu$ so that the empirical censoring proportion is approximately $10\%$ across designs. Latent factors $(V_n,U_p)$ are drawn i.i.d.\ from $\mathcal N(0,I_r)$ with $r=4$ and then fixed throughout the simulation. 

For the event-time models, we consider two canonical survival specifications.

\emph{1.\ Cox proportional hazards model.}
For each unit $n$ and period $p\in\{0,1\}$, we generate potential event times under control from
\[
h^{(0)}_{p,n}(t)
=
\widetilde h(t)\exp\!\big(\beta_1^\top V_n + \beta_2^\top U_p\big),
\qquad \widetilde h(t)\equiv \lambda > 0,
\]
with $\lambda$ fixed (we take $\lambda = 0.05$ in all experiments) and the coefficient vectors $\beta_1,\beta_2\in\mathbb R^r$ are drawn as $\beta_1,\beta_2 \overset{\text{i.i.d.}}{\sim} \mathcal N(0,I_r)$. The implied survival function under control is thus
\[
S^{(0)}_{p,n}(t)
=
\exp\big[-\lambda t \exp\big(\beta_1^\top V_n + \beta_2^\top U_p\big)\big],
\]
and potential event times $\tau^{(0)}_{p,n}$ are sampled via inverse transform:
\[
\tau^{(0)}_{p,n}
=
\frac{-\log W}{\lambda \exp(\beta_1^\top V_n + \beta_2^\top U_p)},
\qquad W\sim\mathrm{Unif}(0,1).
\]

\emph{2.\ Aalen’s additive hazards model.}
For each unit $n$ and period $p\in\{0,1\}$, we generate potential event times under control from
\[
h^{(0)}_{p,n}(t)
=
\widetilde h'(t)+\beta_1'^\top V_n+\beta_2'^\top U_p,
\qquad \widetilde h'(t)\equiv \lambda_0 > 0.
\]
with the coefficient vectors $\beta_1',\beta_2'\in\mathbb R^r$ are drawn as $\beta_1',\beta_2' \overset{\text{i.i.d.}}{\sim} \mathcal N(0,I_r)$. To guarantee nonnegative hazards, we choose
$
\lambda_0
=
0.05 - \min_{p\in\{0,1\}, n\in[N]}
\big(\beta_1'^\top V_n + \beta_2'^\top U_p\big)
$. Thus, the potential event times $\tau^{(0)}_{p,n}$ are sampled via inverse transform:
\[
\tau^{(0)}_{p,n}
=
\frac{-\log W}{\lambda_0 + \beta_1'^\top V_n + \beta_2'^\top U_p},
\qquad W\sim\mathrm{Unif}(0,1).
\]

\paragraph{Evaluation metrics.}\label{subsec:metrics}
For the treated unit $n$, let 
$\theta_n(t)=S^{(0)}_{1,n}(t)$ denote the true post–period control survival trajectory based on the true data generating process and 
$\widehat{\theta}_n(t)$ its SSC estimate, both evaluated on a common time grid 
$\mathcal T=\{t_1,\ldots,t_{T_0}\}\subset[0,\widetilde{\tau}]$. Specifically, the evaluation horizon is set to be $
\widetilde{\tau}
:=\mathrm{quantile}_{0.90}\!\left(\{T_{p,n,i}\}\right),
$
which trims extreme right tails where empirical support is sparse, stabilizing the downstream survival estimates. And the grid is then chosen as $T_0=100$ equally spaced timestamps
$\mathcal T=\mathrm{linspace}(0,\widetilde{\tau},T_0)$. 
We quantify estimation accuracy using the \emph{sup–norm error}
\[
\|\widehat{\theta}_n-\theta_n\|_{\infty}
:=\max_{t\in\mathcal T}\big|\widehat{\theta}_n(t)-\theta_n(t)\big|,
\]
which directly captures the worst–case deviation between the estimated and true trajectories.

\begin{figure}[t]
    \centering
    \includegraphics[width=\linewidth]{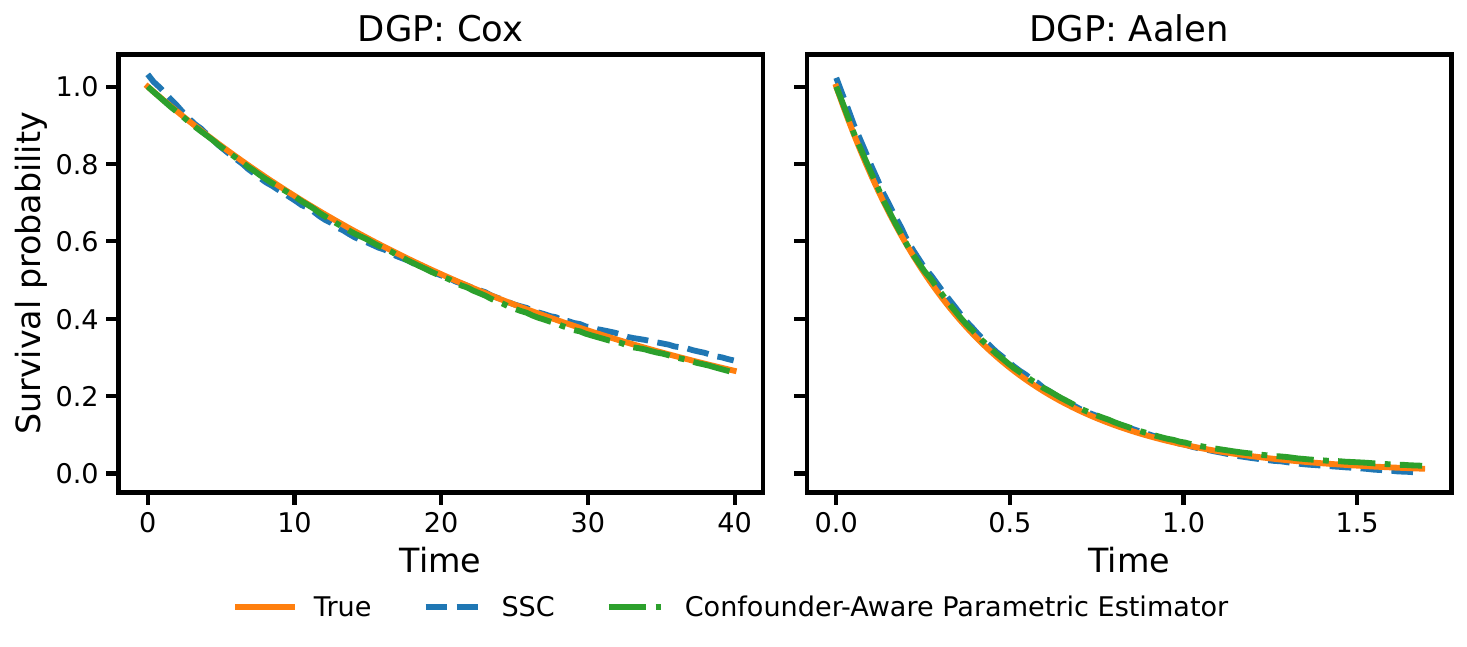}
    \caption{Counterfactual survival estimation under two DGPs.
    Post-period control survival trajectory for the treated unit under:
    (i) the true DGP, 
    (ii) SSC, and 
    (iii) the Confounder-Aware Parametric Estimator which observes the true latent factors and fits the correctly specified parametric survival model.
    SSC closely approximates the true curve in both Cox and Aalen models.}
    \label{fig:ssc_vs_oracles}
\end{figure}

\begin{figure}[t]
    \centering
    \includegraphics[width=\linewidth]{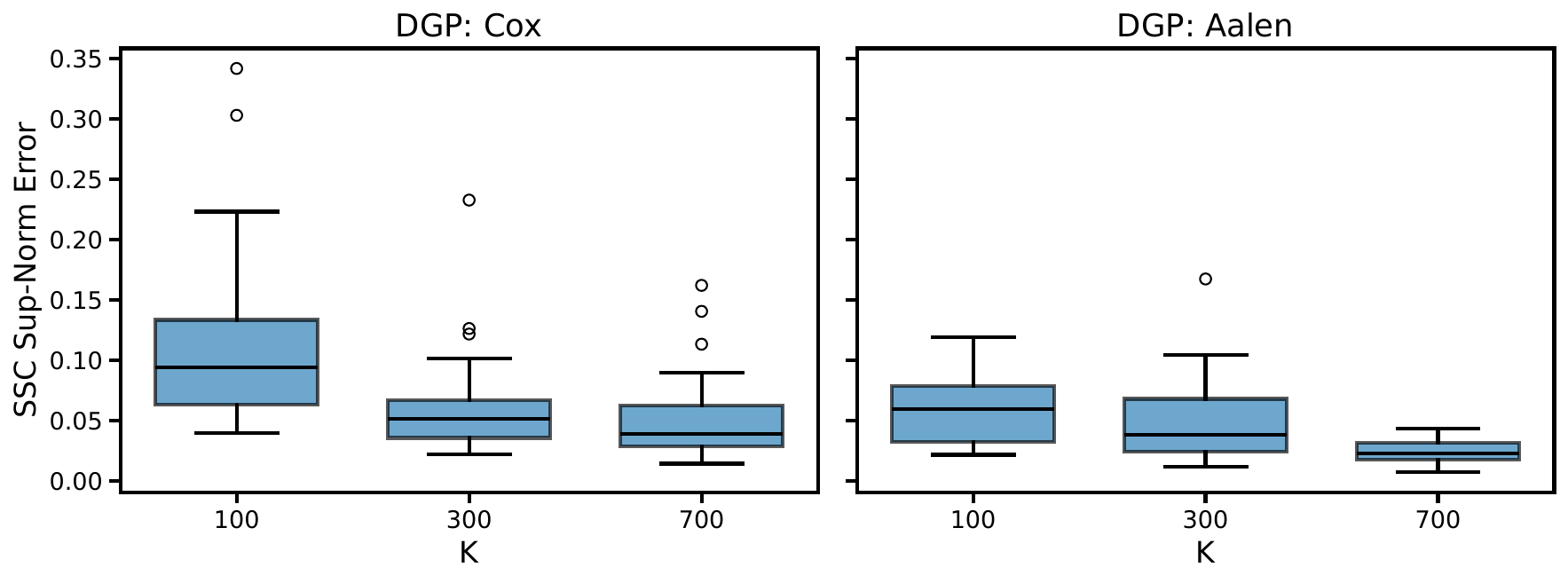}
    \caption{{SSC sup-norm errors across sample sizes.}
    Distribution of sup-norm estimation errors for SSC under Cox and Aalen DGPs over 20 simulations each.
    Errors decrease rapidly with increasing sample size $K$, and variability contracts accordingly.}
    \label{fig:ssc_boxplots}
\end{figure}

\begin{table}[t]
\centering
\caption{Sup-norm error summary for Cox and Aalen DGPs.
Reported are mean $\pm$ standard deviation across simulations.}
\label{tab:synthetic_results}
\begin{tabular}{lccc}
\toprule
DGP & $K$ &
\begin{tabular}{c}SSC Error \\ (mean $\pm$ sd)\end{tabular} &
\begin{tabular}{c}Confounder-Aware Parametric \\ Estimator Error (mean $\pm$ sd)\end{tabular}
\\
\midrule
Cox   & 100 & 0.1177 $\pm$ 0.0835 & 0.0247 $\pm$ 0.0121 \\
Cox   & 300 & 0.0652 $\pm$ 0.0497 & 0.0204 $\pm$ 0.0096 \\
Cox   & 700 & 0.0542 $\pm$ 0.0413 & 0.0194 $\pm$ 0.0104 \\[4pt]

Aalen & 100 & 0.0621 $\pm$ 0.0307 & 0.0185 $\pm$ 0.0052 \\
Aalen & 300 & 0.0507 $\pm$ 0.0388 & 0.0136 $\pm$ 0.0067 \\
Aalen & 700 & 0.0245 $\pm$ 0.0102 & 0.0099 $\pm$ 0.0044 \\
\bottomrule
\end{tabular}
\end{table}

\paragraph{Evaluation results.}
Figure~\ref{fig:ssc_vs_oracles} compares, in one example simulation with $K=700$ observations for each period-unit pair, the post-treatment control survival trajectory of the treated unit from the true data generating proces, SSC estimator and Confounder-Aware Parametric Estimator, which observes the true latent factors and fits the correctly specified parametric survival model. In both the Cox and Aalen settings, the SSC estimator closely matches the ground-truth trajectory. The agreement is particularly notable because SSC does not rely on any knowledge of the underlying survival model or the latent confounders. Its estimates also closely track those of the Confounder-Aware Parametric Estimator. This demonstrates that SSC can recover the correct counterfactual trajectory even when the data are generated from structurally distinct models and fully latent confounding variables that are agnostic to the SSC estimator itself.

We repeat the simulation process for 20 times for $K = \{100, 300, 700\}$ each respectively and Figure~\ref{fig:ssc_boxplots} reports the distribution of sup-norm errors for SSC as the number of observations per unit-period increases. Both DGPs exhibit a clear and rapid reduction in error as $K$ grows: both the median error and variability shrink substantially from $K=100$ to $K=700$. 

Table~\ref{tab:synthetic_results} summarizes the mean and standard deviation of SSC’s sup-norm error across repeated simulations. In both DGPs, SSC achieves diminishing average errors as $K$ increases. The Confounder-Aware Parametric Estimator consistently attains lower error, as expected, since it is given access to the true confounders and the correct parametric form, but the performance gap narrows quickly with increasing $K$. This convergence in accuracy highlights SSC’s ability to fully exploit the latent low-rank structure without requiring model knowledge or observing the confounders themselves.

Overall, the synthetic experiments demonstrate that SSC is accurate, robust across model classes, and exhibits the expected improvement with additional data.

\section{Empirical Application: Evaluating Novel Therapies for T-cell Lymphoma}\label{sec:empirics}

  To complement the synthetic validation, we now demonstrate the use of SSC in our motivating clinical example for evaluating the second-line therapies for relapsed and refractor TCL. This application fits precisely into our SSC framework as all countries prescribe CC in both lines of treatment except the USA, as the treated unit, switches to prescribe the novel SA in second-line. Our multi-country dataset naturally supplies a donor pool of control units so that we can use our SSC estimation procedure to leverage the control units to construct a synthetic control for the USA under CC in the second-line and compare it against the factual USA receiving the novel SA treatment. This section therefore applies SSC to quantify the superiority of treatment efficacy of the novel SA in TCL and to assess estimator fidelity using a held-out USA-CC cohort.

\subsection{Data}
We use a retrospective clinical dataset on 925 patients with relapsed and refractory TCL from 13
institutions across 10 countries. The data includes time-to-event information and  treatment assignments
(SA, as intervention, or CC, as control). Follow-up occurred from
the date of cancer diagnosis until death, loss to follow-up, or end of study.

The analysis defines the first-line treatment period as pre-period ($p=0$) and the second-line
treatment as post-period ($p=1$). The treatment assignment $D_{p,n}$ indicates whether a patient
received SA ($D_{p,n}=1$) or CC ($D_{p,n}=0$) in period $p$. For each patient, the observed time $T_{p,n,i}$ is the minimum of the time-to-event ($\tau_{p,n,i}$) and
time-to-censoring ($C_{p,n,i}$). The censoring indicator is $\Delta_{p,n,i}$ = 1 if $\tau_{p,n,i} \leq C_{p,n,i}$, and 0 otherwise. Since the target outcome is failure of treatment,
in the pre-period, the event is death or start of second-line treatment, and censoring
is loss-to-followup; in the post-period, the event is death or start of third-line treatment,
and censoring is loss-to-followup.

\subsection{Analysis and Results}
Our empirical estimand is the counterfactual second-line survival trajectory for the USA had it continued prescribing CC rather than adopting SA. Applying the SSC procedure, we estimate the USA-SA cohort's counterfactual survival under CC in second-line and report the pre- and post-period survival curves in Figure~\ref{fig:loghazard}. The factual USA trajectory under SA (denoted in orange line) exhibits consistently higher survival than its synthetic control (denoted in blue dash line), suggesting superiority in efficacy of SA relative to CC.

\begin{figure}[htbp]
\centering
\includegraphics[width=\textwidth]{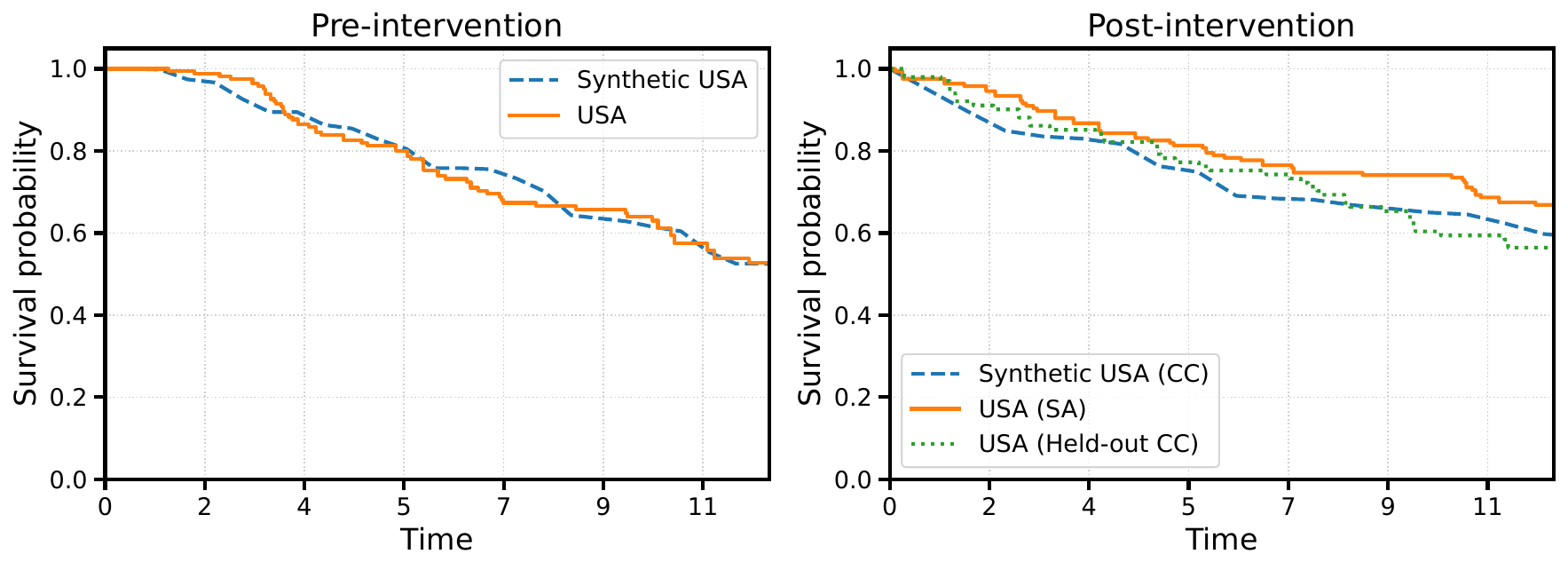}
\caption{Pre-/Post-treatment survival function: USA factual (SA in second-line, orange), synthetic counterfactual under CC (blue), USA held-out (CC in second-line, green).}
\label{fig:loghazard}
\end{figure}

To validate the estimator, we exploit the fact that we observe a cohort of USA patients who received
CC in the second-line treatment. A necessary step is to establish that the held-out USA-CC and USA-SA cohorts do not differ systematically at baseline. We compare the two cohorts along a comprehensive set of prognostic, clinical, demographic, and laboratory variables, including International Prognostic Index (IPI), Prognostic Index for T-cell lymphoma (PIT), disease characteristics, and standard laboratory markers. Continuous covariates are assessed using two-sample $t$-tests and binary covariates using chi-squared tests. In addition, we compute standardized mean differences (SMDs), interpreting SMD $<0.1$ as negligible imbalance and SMD $<0.2$ as acceptable. Across all covariates, hypothesis tests yielded $p$-values $>0.1$, and all SMDs were below $0.2$ (with the majority below $0.1$), indicating that the two cohorts are well balanced at baseline. This comparability enables the held-out USA-CC cohort to serve as a real-world benchmark for assessing estimator fidelity. We also note that the USA-CC cohort is completely hidden from our SSC estimator, thus no information leakage.  
Figure~\ref{fig:loghazard} shows that SSC closely matches the true USA--CC trajectory in the second-line, supporting the credibility of the synthetic control estimation.

To further assess robustness, we perform $500$ bootstrap resamples of the donor pool and recompute the synthetic USA-CC counterfactual for each resample. The resulting 95\% bootstrap confidence interval is shown in Figure~\ref{fig:bootstrap} alongside the ground truth USA-CC trajectory. The synthetic estimates remain stable across resamples and the confidence band closely overlaps the held-out USA-CC curve throughout the post-period. This bootstrap analysis highlights the robustness of SSC to sampling variability and provides additional empirical support for the accuracy of our estimator in a real clinical setting.

\begin{table}[htbp]
\centering
\caption{Comparability of baseline characteristics between USA SA and CC cohorts. Values are reported as mean (standard deviation) unless otherwise noted. There is no statistical evidence of imbalance between the USA CC and SA cohorts across a broad set of clinically relevant baseline characteristics.}
\label{tab:comparability}
\begin{tabular}{lllll}
\toprule
Variable & USA SA mean (std) & USA CC mean (std) & $p$-value & SMD \\
\midrule
\multicolumn{5}{l}{\textbf{Prognostic Score}} \\
IPI Score & 2.52 (0.98) & 2.58 (0.99) & 0.715 & 0.06 \\
PIT Score & 1.65 (1.03) & 1.76 (1.03) & 0.518 & 0.10 \\
\midrule
\multicolumn{5}{l}{\textbf{Clinical}} \\
Histology: AITL (\%) & 38.2 (48.6) & 31.7 (46.5) & 0.337 & 0.14 \\
Mean duration 1st line (yrs) & 0.94 (1.17) & 0.96 (1.09) & 0.906 & 0.01 \\
Refractory (\%) & 41.2 (49.2) & 39.8 (48.9) & 0.923 & 0.03 \\
Extranodal $>$1 (\%) & 24.1 (42.8) & 26.9 (44.4) & 0.707 & 0.06 \\
Stage & 3.29 (0.88) & 3.31 (0.92) & 0.861 & 0.02 \\
ECOG Score & 0.81 (0.74) & 0.89 (0.85) & 0.407 & 0.11 \\
\midrule
\multicolumn{5}{l}{\textbf{Demographics}} \\
Age $>$ 60 (\%) & 54.1 (49.8) & 44.2 (49.7) & 0.143 & 0.20 \\
Race: White (\%) & 0.79 (0.41) & 0.78 (0.42) & 0.765 & 0.04 \\
Sex: Female (\%) & 37.6 (48.5) & 30.8 (46.2) & 0.304 & 0.14 \\
\midrule
\multicolumn{5}{l}{\textbf{Laboratory}} \\
Ki67 $\geq 40$\% & 30.6 (46.1) & 31.7 (46.5) & 0.949 & 0.02 \\
Abnormal ALC & 41.9 (49.3) & 37.8 (48.5) & 0.679 & 0.08 \\
Abnormal Albumin & 38.8 (48.7) & 40.0 (49.0) & 0.973 & 0.02 \\
Abnormal HepB\_cAb & 2.33 (0.51) & 2.31 (0.53) & 0.869 & 0.03 \\
\bottomrule
\end{tabular}
\end{table}

\begin{figure}[!h]
    \centering
    \includegraphics[width=0.6\linewidth]{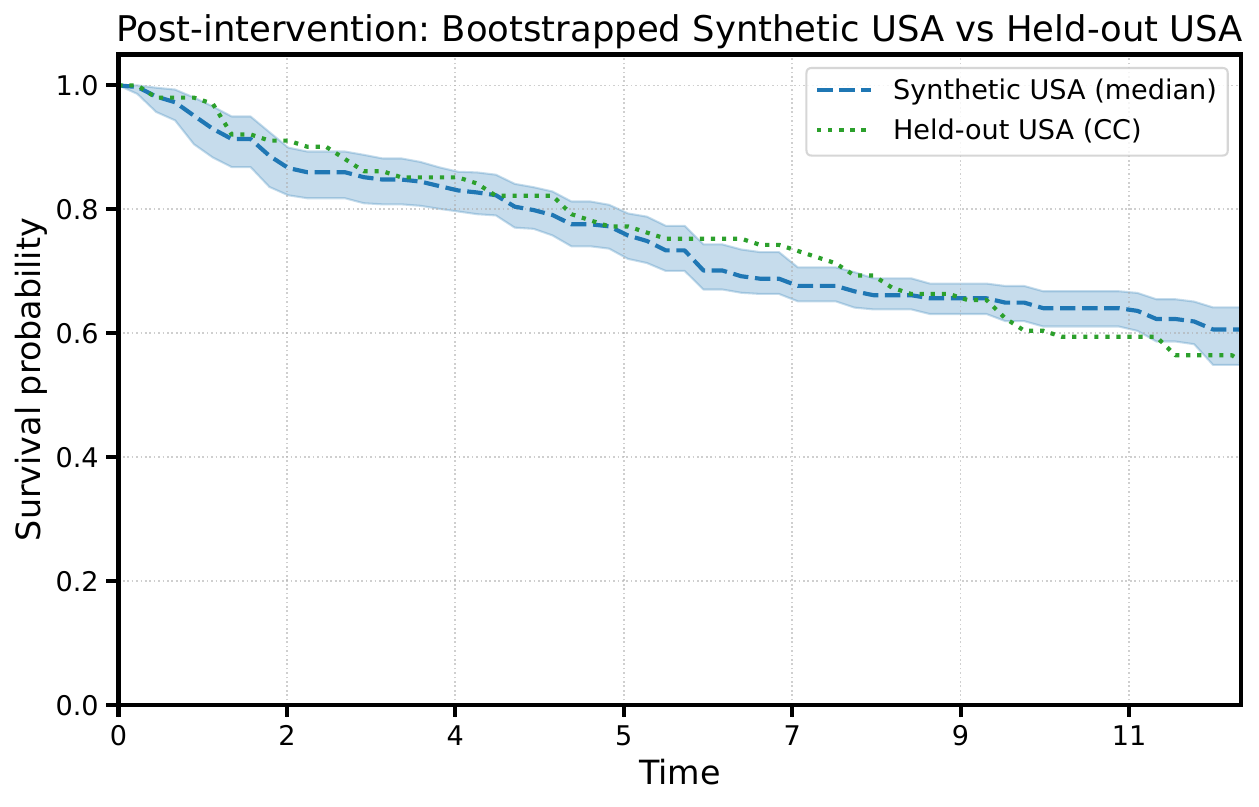}
    \caption{Bootstrapped 95\% CI for synthetic USA SA cohort under CC, alongside the USA CC cohort ``ground truth'', in post-period. The shaded blue region represents the 95\% CI.}
    \label{fig:bootstrap}
\end{figure}

\section{Conclusion}\label{sec:conclusion}
This paper develops a causal framework for survival analysis in panel settings and applies it to evaluate impact of novel TCL therapies. Our method extends synthetic control techniques to the hazard domain, enabling counterfactual estimation with observational panel data.

Empirically, we find that access to novel SA therapies in the US is associated with a significant reduction in post-treatment hazard rates compared to a synthetic control constructed from countries that continue to rely on CC. This suggests that SA may provide a meaningful clinical benefit in prolonging survival among patients with aggressive TCL subtypes. Our framework also enables rigorous validation: by comparing synthetic USA under CC to actual USA-CC patient outcomes, we show that the synthetic control reproduces key survival trends in held-out data. This supports the internal validity of the estimated counterfactual survival.

Several limitations merit discussion. First, we assume non-informative censoring, which may not hold in all clinical settings. Extensions using IPCW \citep{ipcw} or joint modeling should help relax this. Second, we do not directly leverage the observed covariates in the unobserved confounding panel setup. Future work could explore extensions to time-dependent treatment regimes and the incorporation of high-dimensional covariates. We expect our framework to be extended to applications in other domains where timing matters.

\clearpage
\bibliographystyle{plainnat}   %
\bibliography{css}      %

\clearpage
\appendix
\thispagestyle{empty}

\onecolumn
\section{Proof of Proposition \ref{prop:identification}}\label{appendix:prop}
\begin{proof}
All the following proof is for a specific unit $n$ and period $p$ and thus we omit the subscripts for simplicity.

By the consistency and exchangeability assumption,
\begin{align}
S^{(d)}(t) &= \mathbb{E}_X[S^{(d)}(t\mid D=d,X)] = \mathbb{E}_X[S(t\mid D=d,X=x)]  \label{eq:s}\\
f^{(d)}(t) &= \mathbb{E}_X[f^{(d)}(t\mid X=x)] = \mathbb{E}_X[f(t\mid D=d,X=x)] \label{eq:f}.
\end{align}

Importantly, unlike $S^{(d)}(t)$ or 
$f^{(d)}(t)$, as the instantaneous rate of event occurrence given that the event has not happened, $h^{(d)}(t) = \frac{f^{(d)}(t)}{S^{(d)}(t)}$ does not decompose linearly over covariates $X$; that is, the marginal hazard 
$h^{(d)}(t)\neq \mathbb{E}_X[h(t | D=d,X)]$. Instead, it follows from \eqref{eq:s} and \eqref{eq:f} that
\[
h^{(d)}(t) 
= \frac{f^{(d)}(t)}{S^{(d)}(t)}
= \frac{\mathbb{E}_X[f(t\mid D=d,X)]}
       {\mathbb{E}_X[S(t\mid D=d,X)]}=\frac{\mathbb{E}_X[h(t\mid D=d,X) S(t\mid D=d,X)]}
       {\mathbb{E}_X[S(t\mid D=d,X)]}.
\]

\end{proof}

\section{Proof of Lemma \ref{lem:approx}}\label{appendix:proof_approx}
The Lagrange form of the Taylor remainder gives, for some $c$ between $0$ and $x$,
\[
R_r(x)
= e^x - \sum_{k=0}^{r-1} \frac{x^k}{k!}
= \frac{e^{c}}{r!} x^{r}.
\]
Hence, for all $|x|\le B$,
\[
|R_r(x)| \le \frac{e^{|c|}}{r!} |x|^{r} \le \frac{e^{B}}{r!} B^{r}.
\]
By the lower bound in Stirling's inequality,
\[
r! \ge \left(\frac{r}{e}\right)^{r},
\]
so that
\[
|R_r(x)| \le e^{B} \left(\frac{eB}{r}\right)^{r}.
\]
Thus, we wish to show that
\[
e^{B}\!\left(\frac{eB}{r}\right)^{r} \le \varepsilon
\quad\text{if }
r \ge 5\max\{B, \ln(1/\varepsilon)\}.
\]
We consider two cases.

\medskip
\noindent{Case 1:} $\ln(1/\varepsilon)\ge B$.  
Then $r\ge 5\ln(1/\varepsilon)$ and $r\ge 5B$, hence
\[
\frac{eB}{r}\le \frac{eB}{5\ln(1/\varepsilon)}\le \frac{e}{5}.
\]
Therefore, since $\frac{e}{5}<1$ and $B\le \ln(1/\varepsilon)$,
\[
e^{B}\!\left(\frac{eB}{r}\right)^{r}
\le \varepsilon^{-1}\!\left(\frac{e}{5}\right)^{r}
\le \varepsilon^{-1}\!\left(\frac{e}{5}\right)^{5\ln(1/\varepsilon)}
= \varepsilon^{ 5\ln 5-6} \leq \varepsilon.
\]
since $5\ln 5-6>1$.

\medskip
\noindent{Case 2:} $B\ge \ln(1/\varepsilon)$.  
Now $r\ge 5B$, so
\[
e^{B}\!\left(\frac{eB}{r}\right)^{r}
\le e^{B}\!\left(\frac{e}{5}\right)^{5B}
= \exp\!\big(B(6-5\ln 5)\big).
\]
Since $6-5\ln 5\approx -2.047\le -1$, we obtain
\[
\exp\!\big(B(6-5\ln 5)\big)\le e^{-B}\le e^{-\ln(1/\varepsilon)}=\varepsilon.
\]

\medskip
In both cases, $e^{B}(eB/r)^{r}\le\varepsilon$, thus we have
\[
r \ge 5\max\{B, \ln(1/\varepsilon)\}
\quad\Longrightarrow\quad
e^{B}\!\left(\frac{eB}{r}\right)^{r} \le \varepsilon.
\]

\section{Proof of Theorem \ref{theorem:theorem1}} \label{appendix:theorem1}
\begin{proof}
Consider a treatment unit \( n \in \mathcal{I}^{(1)} \) and time \( t \in [T] \). By Assumption~\ref{assump:factor_model} and ~\ref{assump:selection_latent},
\[
S_{1,n}^{(0)}(t) = \langle u_1(t), v_n \rangle.%
\]

By Assumption~\ref{assump:linearity_span}, there exists weights \( \{ w_{n,m} \}_{m \in \mathcal{I}^{(0)}} \) such that:
\[
v_n = \sum_{m \in \mathcal{I}^{(0)}} w_{n,m} \cdot v_m.
\]

Substituting into the expression above:
\[
S_{1,n}^{(0)}(t) = \left\langle u_1(t), \sum_{m \in \mathcal{I}^{(0)}} w_{n,m} v_m \right\rangle = \sum_{m \in \mathcal{I}^{(0)}} w_{n,m} \cdot \langle u_1(t), v_m \rangle=\sum_{m \in \mathcal{I}^{(0)}} w_{n,m} \cdot S_{1,m}^{(0)}(t).
\]

\end{proof}

\section{Time-to-event Estimation with Kaplan-Meier Estimator}
\label{sec:KM}

Fix a period-unit pair $(p,m)$ with $\widetilde K$ observed event times, i.e., $\sum_{i}\Delta_{p, m, i}=\widetilde K$. Let the distinct observed event times thus be
$
t_{(1)} < t_{(2)} < \cdots < t_{(\widetilde K)},
$
computed from $T_{p,m,i}$ with $\Delta_{p,m,i}=1$.
For each event time $t_{(j)}$, define:
\[
d_{p,m}(t_{(j)}) = \#\{ i : T_{p,m,i} = t_{(j)}, \ \Delta_{p,m,i} = 1\},
\]
\[
n_{p,m}(t_{(j)}) = \#\{ i: T_{p,m,i} \ge t_{(j)}\}.
\]

Then the KM estimator is
\begin{equation}
\widehat S^{d,K}_{p,m}(t)
=
\prod_{t_{(j)} \le t}
\left(1 - \frac{d_{p,m}(t_{(j)})}{n_{p,m}(t_{(j)})}\right),
\qquad t \in \mathcal{T}.
\label{eq:KM-pm}
\end{equation}

We adopt the convention $\widehat S^{d,K}_{p,m}(0)=1$.

\section{Finite Sample Guarantee of SSC}

Fix a period $p\in\{0,1\}$ and a finite evaluation grid 
$\mathcal T=\{t_1,\ldots,t_{T_0}\}\subseteq[0,\widetilde\tau]$.
For unit $m$ in period $p$, we denote the observed data to be $\{(D_{p, m, i}, T_{p,m,i},\Delta_{p,m,i})\}_{i=1}^{K}$ where $D_{p, m, i}$ is the assigned treatment, observed time $T_{p,m,i}=\min\{\tau_{p,m,i},C_{p,m,i}\}$, event indicator $\Delta_{p, m,i}=\mathbf 1\{\tau_{p,m,i}\le C_{p,m,i}\}$, $\tau_{p,m,i}$ survival times, 
$C_{p,m,i}$  censoring times satisfying the positivity condition $
\inf_{0\le t\le \widetilde\tau}\Pr(C_{p, m}>t) \ge \delta>0$, and $K$ the minimum number of observations per unit and period across all units and periods. $\tau_{p, m}$ and $C_{p,m}$ satisfy non-informative censoring (Assumption~\ref{ass:censoring}).

Let $\widehat S^{d, K}_{p,m}(\cdot)$ be the Kaplan--Meier (KM) estimate of the survival distribution %
based on
$\{(D_{p, m, i}, T_{p,m,i},\Delta_{p,m,i})\}_{i=1}^{K}$ if $D_{p, m, i}=d     \ \forall i$. Since the control units are under the control treatment across periods, $D_{p, m, i} = 0$ for $p \in \{0, 1\}$, for all $i$ and $m \in \mathcal{I}^{(0)}$. 

Define the KM estimation error matrix for all the control units $\mathcal{I}^{(0)}$ on $\mathcal T$ by
\begin{equation}
    E_p  =  \big[\widehat S^{0, K}_{p,m}(t)-S_{p, m}^{0}(t)\big]_{m\in \mathcal{I}^{(0)}, t\in\mathcal T} \label{eq:E_p}.
\end{equation}

All the following lemmas and proofs adopt the aforementioned notations and assumptions. All $O_p(\cdot)$ statements are with respect to $\min(T_0, N_0)$ where $T_0$ is the number of evaluation timestamps and $N_0$ is the number of control units. For simplicity, note that in the following sections we use $C$ to denote a generic constant whose value may change in different contexts but does not depend on problem parameters such as $N_0,T_0,r_0$.

Essentially, our proof of Theorem \ref{thm:consistency}  proceeds in three stages. We first establish uniform entry-wise bounds for $E_p$, the error in the Kaplan–Meier estimator in the presence of non-informative censoring (under appropriate conditions). 
This leads to operator–norm bounds for $E_p$. With large enough $K$, %
it will satisfy the desired stability property in noisy observations of low-rank covariates as required by Principal Component Regression (PCR) method. Towards that, as the next step, we state a generic PCR stability result analogous to what is known in literature (specifically, see \citet{agarwal2024syntheticinterventions}). Together, this would conclude that SSC manages to estimate the desired 
weights within small enough error. 
Finally, combining these with row–space inclusion property enables the method to extrapolate faithfully to estimate counterfactual survival function for the treated unit of interest. 

\subsection{Helper Lemmas for Theorem \ref{thm:consistency}}

In this section, we state and prove a few helper lemmas for invoking the generic PCR stability result following  \citet{agarwal2024syntheticinterventions}. Essentially, in the prior literatue \citet{agarwal2024syntheticinterventions}
the property of PCR is analyzed under low-rank covariates with their noisy observations where element-wise noise is
distributed as per $0$ mean, independent sub-Gaussian random variable. In our setting, the noise is induced due to estimation
error of Kaplan-Meier procedure, which is uniformly bounded but arbitrary. Towards that, we extend the PCR stability result
to account for such noisy setting.

\subsubsection{Perturbation of Singular Values}
First, we show the singular values of the true and estimated survival functions of the control units in the pre-period, $S^0_{0, \mathcal{I}^{(0)}}(\cdot)$ and $\widehat{S}^{0, K}_{0, \mathcal{I}^{(0)}}(\cdot)\in\mathbb R^{T_0\times N_0}$, are very close.

\begin{lemma}[Perturbation of Singular Values]\label{lem:KM-singular-values}
Conditioned on $\{\mathcal L\mathcal F,\mathcal D\}$, let $s_\ell$ and $\widehat s_\ell$ be the $\ell$-th singular values of the true and estimated survival functions of the control units in the pre-period, $S^0_{0, \mathcal{I}^{(0)}}(\cdot)$ and $\widehat{S}^{0, K}_{0, \mathcal{I}^{(0)}}(\cdot)\in\mathbb R^{T_0\times N_0}$ respectively. Then for any $\zeta>0$ and $\ell\le \min\{T_0,N_0\}$,
if
\begin{equation}
K  \ge  c \max\{N_0,T_0\} \bigl(\zeta^2+\log N_0+\log\max\{N_0,T_0\}\bigr) \label{eq:K-growth}
\end{equation}
then with probability at least $1-2e^{-\zeta^2}$,
\begin{equation} \label{eq:s_l}
     \bigl|s_\ell-\widehat s_\ell\bigr|
 \le  C\bigl(\sqrt{T_0}+\sqrt{N_0}\bigr),
\end{equation}
for some constants $c, C$.
\end{lemma}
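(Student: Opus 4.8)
The plan is to reduce the statement to an operator-norm bound on the Kaplan--Meier error matrix, bound that operator norm crudely by the Frobenius norm, and then control the Frobenius norm through uniform entrywise concentration of the KM estimator, finally cancelling the logarithmic factors using the growth condition \eqref{eq:K-growth}.

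First I would apply Weyl's inequality for singular values. Writing $E_0$ for the Kaplan--Meier error matrix in \eqref{eq:E_p} (whose singular-value spectrum does not depend on its orientation), note that conditioned on $\{\mathcal{LF},\mathcal D\}$ the target matrix $S^{0}_{0,\mathcal I^{(0)}}$ is deterministic, so $|s_\ell-\widehat s_\ell|\le\|E_0\|_{\mathrm{op}}$ for every $\ell\le\min\{T_0,N_0\}$. Since $\|E_0\|_{\mathrm{op}}\le\|E_0\|_F$, it then suffices to show $\|E_0\|_F\le C(\sqrt{T_0}+\sqrt{N_0})$ with probability at least $1-2e^{-\zeta^2}$ under \eqref{eq:K-growth}.

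Next I would establish a uniform entrywise bound on the KM error. Fix a control unit $m\in\mathcal I^{(0)}$ and a grid point $t\in\mathcal T$; conditioned on $\{\mathcal{LF},\mathcal D\}$ the $K_{0,m}\ge K$ observations of unit $m$ in the pre-period are i.i.d.\ from the conditional law determined by $(V_m,U_0)$, with non-informative censoring (Assumption~\ref{ass:censoring}) and $\inf_{0\le s\le\widetilde\tau}\Pr(C_{0,m}>s)\ge\delta$. Representing $\widehat S^{0,K}_{0,m}-S^{0}_{0,m}$ via the Nelson--Aalen martingale (or invoking a DKW-type inequality for the KM estimator under censoring), the constant $\delta$ lower-bounds the number-at-risk process uniformly on $[0,\widetilde\tau]\supseteq\mathcal T$, which controls the predictable quadratic variation, and a Bernstein/Freedman tail bound yields a constant $c'>0$ with $\Pr(|\widehat S^{0,K}_{0,m}(t)-S^{0}_{0,m}(t)|>u)\le 2\exp(-c'Ku^2)$ for $u$ below a fixed threshold. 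A union bound over the $N_0T_0$ pairs $(m,t)$ with $u^2=(\zeta^2+\log N_0+\log\max\{N_0,T_0\})/(c'K)$, using $\log(N_0T_0)=\log N_0+\log T_0\le\log N_0+\log\max\{N_0,T_0\}$, then gives $\max_{m,t}|E_0(m,t)|^2\le C(\zeta^2+\log N_0+\log\max\{N_0,T_0\})/K$ on an event of probability at least $1-2e^{-\zeta^2}$ (with the constant $c$ in \eqref{eq:K-growth} chosen large enough that this $u$ stays below the validity threshold of the KM tail bound, absorbing its linear correction term).

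Finally I would conclude: on that event $\|E_0\|_F^2\le N_0T_0\max_{m,t}|E_0(m,t)|^2\le CN_0T_0(\zeta^2+\log N_0+\log\max\{N_0,T_0\})/K$, and substituting \eqref{eq:K-growth} the logarithmic factors cancel to leave $\|E_0\|_F^2\le (C/c)\,N_0T_0/\max\{N_0,T_0\}=(C/c)\min\{N_0,T_0\}\le (C/c)(\sqrt{T_0}+\sqrt{N_0})^2$, whence $|s_\ell-\widehat s_\ell|\le\|E_0\|_{\mathrm{op}}\le\|E_0\|_F\le C(\sqrt{T_0}+\sqrt{N_0})$. The hard part is the uniform KM concentration in the third step: obtaining an exponential tail with the correct $1/\sqrt K$ rate while handling the familiar linear correction term in KM DKW-type inequalities (e.g.\ Bitouzé--Laurent--Massart) and the randomness of the at-risk process. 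Both are benign here since $\delta$ pins the at-risk count from below on all of $[0,\widetilde\tau]$, and in the regime of interest $\sqrt K\,u=\Theta(\sqrt{\log(N_0T_0)+\zeta^2})$ is large so the Gaussian term dominates the linear one; the remaining steps (Weyl's inequality, $\|\cdot\|_{\mathrm{op}}\le\|\cdot\|_F$, the union bound, and substituting \eqref{eq:K-growth}) are routine.
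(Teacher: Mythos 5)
Your proof follows essentially the same route as the paper's: Weyl's inequality reduces the claim to an operator-norm bound on the KM error matrix, which is bounded by the Frobenius norm, which in turn is controlled by a uniform entrywise KM concentration bound plus a union bound, with the logarithmic factors cancelled by the growth condition \eqref{eq:K-growth} to leave $\sqrt{\min\{N_0,T_0\}}\le \sqrt{T_0}+\sqrt{N_0}$. The only cosmetic difference is that the paper invokes a sup-norm-in-$t$ KM tail bound (its Lemma \ref{lem:KM-uniform}) and unions only over the $N_0$ control units, whereas you use a pointwise tail bound and union over all $N_0T_0$ grid entries; both give the same logarithmic factor and hence the same final rate.
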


\begin{proof}
    Given the operator norm bound in Lemma \ref{lem:KM-op}, applying the Weyl's inequality, we get \eqref{eq:s_l}.
\end{proof}

\begin{lemma}[Operator-norm control from survival estimation error]\label{lem:KM-op}
For a period $p$, let $E_p\in\mathbb R^{T_0\times N_0}$ defined in \eqref{eq:E_p}. 
Let the setup of Lemma \ref{lem:KM-singular-values} hold. Then for any $\zeta > 0$,
with probability at least  $1-2e^{-\zeta^2}$,
\begin{equation}\label{eq:op-target}
\|E_p\|_{op}  \le  C\big(\sqrt{T_0}+\sqrt{N_0}\big)
\end{equation} for some constant $C>0$. 
\end{lemma}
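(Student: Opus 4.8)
The plan is to route the bound through the Frobenius norm: since $\|E_p\|_{op}\le\|E_p\|_F$, it suffices to control every entry of $E_p$ uniformly and then use the sample-size hypothesis inherited from Lemma~\ref{lem:KM-singular-values} to collapse the resulting $\sqrt{T_0N_0}$ factor down to $\sqrt{\min\{T_0,N_0\}}\le\sqrt{T_0}+\sqrt{N_0}$. The only substantive ingredient is a uniform bound on the Kaplan--Meier error for a single control unit. Fix $m\in\mathcal I^{(0)}$; conditionally on $\{\mathcal{LF},\mathcal D\}$ the $K$ observations $\{(T_{p,m,i},\Delta_{p,m,i})\}_{i\le K}$ are i.i.d.\ draws from the fixed law with survival function $S^0_{p,m}$ and independent censoring whose survival exceeds $\delta$ on $[0,\widetilde\tau]$ (Assumption~\ref{ass:censoring}, the positivity condition $\inf_{t\le\widetilde\tau}\Pr(C_{p,m}>t)\ge\delta$, and $\mathcal S\perp\mathcal D\mid\mathcal{LF}$ from Assumption~\ref{assump:selection_latent}). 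I will invoke a Dvoretzky--Kiefer--Wolfowitz--type exponential inequality for the Kaplan--Meier estimator on $[0,\widetilde\tau]$ --- e.g.\ the Bitouz\'e--Laurent--Massart bound, or a Nelson--Aalen / Duhamel-equation argument together with a Bernstein bound showing the number at risk is $\Omega(K)$ uniformly --- which, under the positivity condition, yields a constant $C_\delta$ depending only on $\delta$ with
\[
\Pr\!\Big(\sup_{t\le\widetilde\tau}\big|\widehat S^{0,K}_{p,m}(t)-S^0_{p,m}(t)\big|>\epsilon\Big)\le C_1 e^{-C_\delta K\epsilon^2}.
\]
Choosing $\epsilon=\epsilon_K$ with $\epsilon_K^2=(\zeta^2+\log N_0+\log\max\{N_0,T_0\}+\log C_1)/(C_\delta K)$ and union-bounding over the $N_0$ control units gives: with probability at least $1-2e^{-\zeta^2}$, $\sup_{t\in\mathcal T}|\widehat S^{0,K}_{p,m}(t)-S^0_{p,m}(t)|\le\epsilon_K$ for every $m\in\mathcal I^{(0)}$, so every entry of $E_p$ is at most $\epsilon_K$ in absolute value.

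On that event, $\|E_p\|_{op}\le\|E_p\|_F\le\sqrt{T_0N_0}\,\epsilon_K$. The hypothesis $K\ge c\max\{N_0,T_0\}(\zeta^2+\log N_0+\log\max\{N_0,T_0\})$ forces $\epsilon_K^2\le 1/\max\{N_0,T_0\}$ once $c$ is chosen large enough in terms of $C_\delta$ and $C_1$, whence
\[
\|E_p\|_{op}\le\sqrt{\tfrac{T_0N_0}{\max\{N_0,T_0\}}}=\sqrt{\min\{N_0,T_0\}}\le\sqrt{T_0}+\sqrt{N_0},
\]
which is the claimed operator-norm bound (in fact with $C=1$).

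The main obstacle is the Kaplan--Meier tail bound in the first paragraph: one needs a genuinely sub-Gaussian tail for $\sup_{t\le\widetilde\tau}|\widehat S^{0,K}_{p,m}-S^0_{p,m}|$ --- a merely polynomial (Chebyshev-type) tail would not survive the union bound over $N_0$ units at the rate $e^{-\zeta^2}$ --- with a constant $C_\delta$ that is free of $N_0,T_0,K$. This is exactly where truncating at the horizon $\widetilde\tau$ matters: $\inf_{t\le\widetilde\tau}\Pr(C_{p,m}>t)\ge\delta$, together with $S^0_{p,m}$ bounded below on $[0,\widetilde\tau]$, keeps the at-risk count uniformly $\Omega(K)$, which is the regime in which the Kaplan--Meier estimator is stable and these DKW-type bounds apply; tracking that $C_\delta$ stays dimension-free is the delicate part. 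Everything downstream --- the union bound, $\|E_p\|_{op}\le\|E_p\|_F$, and plugging in the sample-size condition --- is routine. A sharper route would replace the Frobenius step by a matrix-Bernstein / bounded-column concentration inequality to get $\|E_p\|_{op}\lesssim\epsilon_K(\sqrt{N_0}+\sqrt{T_0})$ directly, but the stated sample-size requirement is already strong enough that the crude Frobenius bound suffices.
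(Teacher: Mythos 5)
Your proposal is correct and follows essentially the same route as the paper: a sub-Gaussian uniform tail bound for the Kaplan--Meier estimator (the paper uses the F\"oldes--Rejt\H{o} bound $\tfrac{A}{\varepsilon}e^{-BK\varepsilon^2}$, you cite the Bitouz\'e--Laurent--Massart form), followed by the same choice of $\varepsilon$, a union bound over the $N_0$ control units, and the crude $\|E_p\|_{op}\le\|E_p\|_F\le\sqrt{T_0N_0}\,\varepsilon$ step collapsed by the sample-size condition on $K$. Your diagnosis that the exponential (rather than polynomial) tail is the load-bearing ingredient, and that the positivity of the censoring survival on $[0,\widetilde\tau]$ is what makes it available, matches the paper exactly.
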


\begin{proof}
We use the following  uniform tail bound for the KM estimation (cf.\ \citet{foldesStrongUniformConsistency1981}):

\begin{lemma}[Uniform error for Kaplan--Meier]\label{lem:KM-uniform}
There exist absolute constants $A,B>0$ such that, for all $\varepsilon\in \big(12/(K\delta^4),1\big)$ where $\delta$ is the lower bound on the censoring survival probability ($\inf_{0\le t\le \widetilde\tau}\Pr(C_{p, m}>t) \ge \delta>0$),
\begin{equation}\label{eq:KM-tail}
\Pr\!\Big(\sup_{0\le t\le\widetilde\tau}\big|\widehat S(t)-S(t)\big|>\varepsilon\Big)
 \le \frac{A}{\varepsilon}\exp\!\big(-B K \varepsilon^2\big).
\end{equation}
\end{lemma}

Under our causal framework, the potential survival function for a unit under its realized treatment coincides with its true observational survival trajectory, which is estimated with the empirical Kaplan–Meier estimator. We hope to establish that the estimated empirical survival function is close to the true observational survival function and the potential survival function under the realized treatment as well. Given Lemma \ref{lem:KM-uniform}, let
\begin{equation}\label{eq:eps-choice}
\varepsilon^2  = 
\frac{c_0 \big(\log N_0 + \log\max\{N_0,T_0\} + \zeta^2\big)}{K},
\qquad
c_0 \ge \max\{2/B,2\}.
\end{equation}
For $K$ satisfying \eqref{eq:K-growth} and $C$ large enough, we have $\varepsilon \le 1$ and $\varepsilon \ge 12/(K\delta^4)$, so \eqref{eq:KM-tail} applies for all control units $m \in \mathcal{I}^{(0)}$ and period $p \in \{0, 1\}$.

By a union bound over $m \in \mathcal{I}^{(0)}$,
\begin{align}
&\Pr\!\Big(\max_{m \in \mathcal{I}^{(0)}}\sup_{0\le t\le\widetilde\tau}
\big|\widehat S^{0, K}_{p,m}(t)-S^0_{p, m}(t)\big|>\varepsilon\Big)\\
&\le
N_0 \frac{A}{\varepsilon} \exp(-BK\varepsilon^2) \nonumber\\
&=
\exp\!\Big(
\log N_0 + \log A - \log\varepsilon
- Bc_0\big(\log N_0 + \log\max\{N_0, T_0\} + \zeta^2\big)
\Big). \label{eq:union-pre}
\end{align}

Note that
\[
-\log\varepsilon
=
\frac{1}{2}\log K - \frac{1}{2}\log\!\big(c_0(\log N_0+\log\max\{N_0,T_0\}+\zeta^2)\big).
\]
From \eqref{eq:K-growth},
\[
\log K 
 \ge 
\log C + \log\max\{N_0,T_0\} 
+ \log\big(\zeta^2 + \log N_0 + \log\max\{N_0,T_0\}\big).
\]
Substituting this into \eqref{eq:union-pre} and canceling terms,
\begin{align*}
\log\Pr(\max_{m\in \mathcal{I}^{(0)},t\in \mathcal{T}}|\widehat S^{0, K}_{p,m}(t)-S^0_{p, m}(t)|>\varepsilon)
& \le 
- Bc_0 \zeta^2 
 -  (Bc_0-1)\log N_0 \\
&\qquad
 - (Bc_0 - \tfrac12)\log\max\{N_0,T_0\}.
\end{align*}
Since $c_0\ge 2/B$, we have $Bc_0-1\ge 1$ and $Bc_0 - \tfrac12 \ge \tfrac12$, hence for $C$ large enough,
\[
\Pr\!\Big(\max_{m\in \mathcal{I}^{(0)},t\in \mathcal{T}}|\widehat S^{0, K}_{p,m}(t)-S^0_{p, m}(t)|>\varepsilon\Big)  \le  2e^{-\zeta^2}.
\]
Thus, with probability at least $1-2e^{-\zeta^2}$,
\begin{equation}\label{eq:entrywise}
\max_{m\in \mathcal{I}^{(0)},t\in \mathcal{T}}|\widehat S^{0, K}_{p,m}(t)-S^0_{p, m}(t)|\leq \varepsilon.
\end{equation}

The \eqref{eq:entrywise} implies
\[
\|E_p\|_F^2 = \sum_{m \in \mathcal{I}^{(0)}}\sum_{t\in\mathcal T} (\widehat S^{0, K}_{p,m}(t)-S^0_{p, m}(t))^2
 \le  N_0 T_0 \varepsilon^2
\quad\Rightarrow\quad
\|E_p\|_{op}\le \|E_p\|_F \le \sqrt{N_0T_0} \varepsilon.
\]

Plugging \eqref{eq:eps-choice} and then \eqref{eq:K-growth},
\begin{align*}
\|E_p\|_{op}
& \le 
\sqrt{N_0T_0} 
\sqrt{\frac{c_0(\log N_0 + \log\max\{N_0,T_0\}+\zeta^2)}{K}}\\
& \le 
\sqrt{\frac{c_0 N_0T_0}{C \max\{N_0,T_0\}}}\\
& \le 
C\big(\sqrt{N_0} + \sqrt{T_0}\big),
\end{align*}
after adjusting the constant $C$. This proves \eqref{eq:op-target}. 
\end{proof}

\subsubsection{Rowspaces Inclusion} Second, we relate the rowspaces of the survival functions in different periods. Observe that Assumption \ref{ass:span} implies that there exist weights $\beta \in \mathbb{R}^{T_0}$ such that
\begin{equation}
    u_1(t) = \sum_{s \in \mathcal{T}} \beta_s u_0(s). \label{eq:KM-weights}
\end{equation}
\begin{lemma}[Rowspace Inclusion]  \label{lem:4}
Let $\beta$ be defined as in~\eqref{eq:KM-weights}. Then for any unit $m \in[N]$,
\begin{align}
    S_{1,m}(t)
 = 
\sum_{s \in \mathcal{T}} \beta_s \cdot
S_{0,m}(s).
\end{align}
\end{lemma}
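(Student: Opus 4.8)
The plan is to prove the identity by the same substitution argument used in the proof of Theorem~\ref{theorem:theorem1}, now carried out \emph{across periods} rather than across units. Fix a unit $m \in [N]$ and a time $t \in \mathcal{T}$, and work conditionally on $\{\mathcal{LF}, \mathcal{D}\}$ so that the latent factors $u_p(\cdot)$ and $v_m$ are deterministic.

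First I would invoke the factor structure. Since the units in question are under the control treatment throughout the study, consistency (Assumption~\ref{ass:sutva}) identifies $S_{p,m}(\cdot)$ with the potential survival function under control, and Assumption~\ref{assump:factor_model} then yields $S_{1,m}(t) = \langle u_1(t), v_m\rangle$ and $S_{0,m}(s) = \langle u_0(s), v_m\rangle$ for every $s \in \mathcal{T}$. Next I would apply the row-space inclusion: Assumption~\ref{ass:span} guarantees the existence of the coefficient vector $\beta \in \mathbb{R}^{T_0}$ of \eqref{eq:KM-weights}, i.e.\ $u_1(t) = \sum_{s \in \mathcal{T}} \beta_s u_0(s)$. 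Substituting this into the expression for $S_{1,m}(t)$ and using bilinearity of the inner product,
\[
S_{1,m}(t)
= \left\langle \sum_{s \in \mathcal{T}} \beta_s\, u_0(s),\ v_m \right\rangle
= \sum_{s \in \mathcal{T}} \beta_s\, \langle u_0(s), v_m\rangle
= \sum_{s \in \mathcal{T}} \beta_s\, S_{0,m}(s),
\]
which is exactly the claimed identity.

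There is no substantive obstacle here: the result is a one-line consequence of bilinearity, mirroring the substitution step in Theorem~\ref{theorem:theorem1}. The only points requiring a little care are (i) stating the identity conditionally on $\{\mathcal{LF}, \mathcal{D}\}$ so that the factors are fixed and Assumptions~\ref{assump:factor_model}--\ref{ass:span} can be applied entrywise, and (ii) noting that $\beta$ is the coefficient vector realizing $u_1(t)$ in $\mathrm{span}(\{u_0(s): s\in\mathcal{T}\})$ and hence may depend on $t$; this dependence is harmless since $t$ is held fixed, and it is consistent with the later use of $\beta$ in bounding the extrapolation error of SSC.
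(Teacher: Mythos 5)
Your proof is correct and follows essentially the same argument as the paper's: express both periods' survival values via the factor model, substitute the span representation $u_1(t)=\sum_{s\in\mathcal{T}}\beta_s u_0(s)$, and conclude by bilinearity of the inner product. Your added remarks on conditioning on $\{\mathcal{LF},\mathcal{D}\}$ and on the (implicit) $t$-dependence of $\beta$ are fine but do not change the substance.
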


\begin{proof}
We have
\begin{align*}
S_{1,m}(t)
&= \langle u_1(t), v_m \rangle \text{\qquad\qquad from Assumption \ref{assump:factor_model} }\\
&=
\big\langle \sum_{s \in \mathcal{T}} \beta_s 
u_{0}(s), v_m \big \rangle \text{\quad from \eqref{eq:KM-weights} }\\
&= \sum_{s \in \mathcal{T}}\big\langle  \beta_s 
u_{0}(s), v_m \big \rangle \\
&= \sum_{s\in \mathcal{T}} \beta_s \cdot S_{0, m}(s) \text{\qquad from Assumption \ref{assump:factor_model} }.
\end{align*}
This completes the proof.
\end{proof}

Going forward, to simply notations, we shall drop the $n$ subscript whenever evident since all of them are 
presented with respect to the target treated unit $n \in \mathcal{I}^{(1)}$.

\subsubsection{Representing Theorem \ref{theorem:theorem1} with the Minimum \texorpdfstring{$\ell_2$}{l2}-Norm Solution}

We present characterization of Theorem \ref{theorem:theorem1} by replacing any weight $w_m$ by 
the unique minimum-norm solution $\widetilde w_m$.
\begin{lemma}[Theorem \ref{theorem:theorem1} with minimum-norm solution weights]
\label{lem:km-lemma5}
Let $V_{0},V_{1}$ denote the row spaces of the true survival functions of the control units in the pre- and post-period,
$S^0_{0, \mathcal{I}^{(0)}}(\cdot)$ and $S^0_{1, \mathcal{I}^{(0)}}(\cdot)$, respectively, and define
\[
\widetilde w   :=  \mathcal P_{V_{0}} w
\quad\text{with}\quad
\mathcal P_{V_{0}}=V_{0}V_{0}^\top .
\]
Then
\[
\theta(t) = S_{1, n}^{(0)}(t)
 = 
\sum_{m\in\mathcal I^{(0)}} \widetilde w_m  S^0_{1,m}(t).
\]
\end{lemma}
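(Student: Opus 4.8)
The plan is to recast the identity as the statement that right-multiplying the post-period control matrix by the projector $\mathcal P_{V_0}$ does nothing, so that replacing $w$ by $\widetilde w=\mathcal P_{V_0}w$ leaves the reconstruction unchanged. Concretely, view $S^0_{0,\mathcal I^{(0)}},S^0_{1,\mathcal I^{(0)}}\in\mathbb R^{T_0\times N_0}$ as the matrices with $(t,m)$ entry $S^0_{p,m}(t)$; then Theorem~\ref{theorem:theorem1} is exactly the vector identity $\theta=S^0_{1,\mathcal I^{(0)}}\,w$ (indexed by $t\in\mathcal T$), and it suffices to show $S^0_{1,\mathcal I^{(0)}}\,\mathcal P_{V_0}=S^0_{1,\mathcal I^{(0)}}$, since then $S^0_{1,\mathcal I^{(0)}}\,\widetilde w=S^0_{1,\mathcal I^{(0)}}\,\mathcal P_{V_0}\,w=S^0_{1,\mathcal I^{(0)}}\,w=\theta$, and reading off the $t$-th coordinate gives the claim.

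To prove $S^0_{1,\mathcal I^{(0)}}\,\mathcal P_{V_0}=S^0_{1,\mathcal I^{(0)}}$ I would establish the rowspace inclusion $V_1\subseteq V_0$. By the factor model (Assumption~\ref{assump:factor_model}), the $t$-th row of $S^0_{1,\mathcal I^{(0)}}$ is the vector $\big(S_{1,m}(t)\big)_{m\in\mathcal I^{(0)}}\in\mathbb R^{N_0}$; by Lemma~\ref{lem:4}, $S_{1,m}(t)=\sum_{s\in\mathcal T}\beta_s S_{0,m}(s)$ with the \emph{same} weight vector $\beta$ for every $m$, so this row equals $\sum_{s\in\mathcal T}\beta_s\,(\text{$s$-th row of }S^0_{0,\mathcal I^{(0)}})$ and hence lies in $V_0$, the rowspace of $S^0_{0,\mathcal I^{(0)}}$. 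Since every row of $S^0_{1,\mathcal I^{(0)}}$ lies in $V_0$, applying the projector $\mathcal P_{V_0}=V_0V_0^\top$ on the right fixes each row, i.e. $S^0_{1,\mathcal I^{(0)}}\,\mathcal P_{V_0}=S^0_{1,\mathcal I^{(0)}}$, which closes the argument. Everything here is under the conditioning on $\{\mathcal{LF},\mathcal{D}\}$ that makes the true survival curves, and therefore $V_0$ and $V_1$, deterministic, exactly as in the preceding lemmas.

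For context, and to justify the lemma's name, I would also note that the identical one-line computation with Assumptions~\ref{assump:factor_model} and \ref{assump:linearity_span} gives $S^0_{0,\mathcal I^{(0)}}\,w=S^0_{0,n}$ (the treated unit is under control in the pre-period, so $S^0_{0,n}$ is its factual pre-period curve), so $w$ is a solution of the pre-period system $S^0_{0,\mathcal I^{(0)}}x=S^0_{0,n}$; because $\widetilde w=\mathcal P_{V_0}w$ is the component of a solution lying in the rowspace of the design matrix, it is precisely the minimum-$\ell_2$-norm solution of that system --- the population quantity that the PCR step of SSC targets --- which is what links this lemma to the subsequent stability analysis. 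There is essentially no obstacle in the argument itself: it is short linear algebra once Lemma~\ref{lem:4} is available. The only points demanding care are that the relevant rowspace $V_0$ lives in $\mathbb R^{N_0}$ (the same space as $w$ and as the estimator's constraint $(I-\widehat V_0\widehat V_0^\top)w=0$), not in $\mathbb R^{T_0}$, and that the weights $\beta$ in Lemma~\ref{lem:4} are unit-independent, which is what lets ``each row of $S^0_{1,\mathcal I^{(0)}}$ is a fixed linear combination of rows of $S^0_{0,\mathcal I^{(0)}}$'' hold simultaneously across all columns $m$.
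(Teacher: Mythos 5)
Your proposal is correct and follows essentially the same route as the paper's own proof: invoke Lemma~\ref{lem:4} to get the rowspace inclusion $\mathrm{rowspan}(S^0_{1,\mathcal I^{(0)}})\subseteq\mathrm{rowspan}(S^0_{0,\mathcal I^{(0)}})$, deduce the projector invariance $S^0_{1,\mathcal I^{(0)}}\mathcal P_{V_0}=S^0_{1,\mathcal I^{(0)}}$, and then substitute $\widetilde w=\mathcal P_{V_0}w$ into the identity from Theorem~\ref{theorem:theorem1}. Your added remarks on the unit-independence of $\beta$ and the minimum-norm interpretation are consistent with how the lemma is used downstream but do not change the argument.
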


\begin{proof}
By Lemma~\ref{lem:4}, the post-period row span is contained in the pre-period row span:
\[
\mathrm{rowspan}(S^0_{1,\mathcal{I}^{(0)}})
\ \subseteq\
\mathrm{rowspan}(S^0_{0,\mathcal{I}^{(0)}}).
\]
Hence, the orthogonal projector $\mathcal P_{V_0} := V_0 V_0^\top$ onto
$\mathrm{rowspan}(S^0_{0,\mathcal{I}^{(0)}})$ leaves $S^0_{1,\mathcal{I}^{(0)}}$ invariant:
\begin{equation}
S^0_{1,\mathcal{I}^{(0)}} \mathcal P_{V_0}
 = 
S^0_{1,\mathcal{I}^{(0)}}.
\label{eq:proj-invariance}
\end{equation}
Therefore,
\begin{align}
S^0_{1,\mathcal{I}^{(0)}} \widetilde w
&= S^0_{1,\mathcal{I}^{(0)}} \mathcal P_{V_0} w \nonumber\\
&= S^0_{1,\mathcal{I}^{(0)}} V_0 V_0^\top w 
\qquad\text{(by definition of $\mathcal P_{V_0}$)} \nonumber\\
&= S^0_{1,\mathcal{I}^{(0)}} w 
\qquad\qquad\text{(by \eqref{eq:proj-invariance})} \nonumber\\
&= \theta(t)\nonumber.
\end{align}
\end{proof}

\subsection{Stability of Principal Component Regression (PCR)}

We state a stability result for PCR building upon the prior works cf. \citet{agarwal2024syntheticinterventions}. It generalizes the prior work for the setting of arbitrary, bounded error.

\begin{lemma}[Stability of PCR]
\label{lem:generic-PCR}
Let $M= U\Sigma V^\top\in\mathbb{R}^{T_0\times N_0}$ be a rank-$r_0$ matrix with left singular space $U \in \mathbb{R}^{T_0 \times r_0}$, right singular space $V\in\mathbb{R}^{N_0\times r_0}$, and singular
values $s_1 \geq \cdots \geq  s_{r_0}  \geq \sqrt{\frac{T_0N_0}{r_0}}$. Assume $\|M\|_{\max}:=\max_{i \in [T_0], j \in [N_0]} | M_{i, j}|\leq 1$. Let $\mathcal P_{A}=AA^\top$ denote the projection matrix onto the subspace spanned by the columns of $A$ for any matrix $A$ with orthonormal columns.
Let $y \in \mathbb{R}^{T_0}$ be a target vector and let $\widetilde{w}\in\mathbb{R}^{N_0}$ 
denote the \emph{minimum $\ell_2$-norm} representation of $y$, i.e. %
\[
y  =  M \widetilde{w} + e,
\qquad
\widetilde{w}=\arg\min_{w: \|y-Mw\| \text{ is minimized}}\|w\|_2  .
\]
Let $e \in \mathbb{R}^{T_0}$ be such that \begin{equation} \label{eq:entrywise-conc-0}
    \|e\|_{\max} :=\max_{i \in [T_0]}| e_i| \le \frac{1}{\max\{\sqrt{T_0}, \sqrt{N_0}\}} 
\end{equation} 
and let $E\in\mathbb{R}^{T_0\times N_0}$ be such that %
\begin{align}
\label{eq:entrywise-conc-1}
&\|E\|_{\max}:=\max_{i \in [T_0], j \in [N_0]} |E_{i, j}| \le \frac{1}{\max\{\sqrt{T_0}, \sqrt{N_0}\}}.
\end{align} 
 Let the observed matrix 
\(Z = M + E\) with \( \widehat U \widehat\Sigma \widehat V^\top\) 
be its SVD, and its truncation using the top $r_0$ singular components is denoted as $Z^{r_0}=  \widehat U_{r_0} \widehat\Sigma_{r_0} \widehat V^\top_{r_0}.$ Let $\widehat{w}$ be the estimate obtained by PCR: obtain top $r_0$ singular vectors of $Z$ and project
$y$ onto the space spanned by them. Then
\begin{align}\label{eq:wtilde-l2l1}
\|\widetilde{w}\|_2
&\le
C \sqrt{\frac{r_0}{N_0}},
\qquad
\|\widetilde{w}\|_1 
\le C\sqrt{r_0},
\\
\label{eq:w2-rate}
\|\widehat{w}  - \widetilde{w}\|_2
&=
O\!\left(
\frac{r_0^{3/4}}{T_0^{1/4} N_0^{1/2}}
+
\frac{r_0^{3/2}}{\min\{T_0, N_0\}}
\right),
\\
\label{eq:proj-rate}
\|\mathcal{P}_{V_{0}}(\widehat{w} - \widetilde w)\|_2
&= 
O\!\left(
\frac{r_0^{2}}{\min\{T_0^{3/2}, N_0^{3/2}\}}
+
\frac{r_0^{3/2}}{N_0^{1/2}\min\{T_0^{1/2}, N_0^{1/2}\}} + \frac{r_0^{3/4}}{T_0^{1/4}N_0^{1/2}}
\right).
\end{align}
\end{lemma}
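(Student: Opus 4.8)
The plan is to follow the error-in-variables PCR stability argument of \citet{agarwal2024syntheticinterventions}, replacing every appeal to the sub-Gaussian/independence structure of the noise by a deterministic bound extracted from the uniform entrywise hypotheses \eqref{eq:entrywise-conc-0}--\eqref{eq:entrywise-conc-1}. The first move is to translate those hypotheses into the spectral quantities the argument actually uses. Since each of the $T_0N_0$ entries of $E$ has magnitude at most $1/\max\{\sqrt{T_0},\sqrt{N_0}\}$,
\[
\|E\|_{op}\le\|E\|_F\le\sqrt{T_0N_0}\,\|E\|_{\max}\le\sqrt{\min\{T_0,N_0\}},
\qquad
\|e\|_2\le\sqrt{T_0}\,\|e\|_{\max}\le\sqrt{T_0/\max\{T_0,N_0\}}.
\]
Combined with $s_{r_0}\ge\sqrt{T_0N_0/r_0}$ and the fact that $M$ has rank exactly $r_0$ (so $\sigma_{r_0+1}(M)=0$ and the relevant spectral gap at the truncation level is $s_{r_0}$ itself), this yields the relative-gap bound $\|E\|_{op}/s_{r_0}\lesssim\sqrt{r_0/\max\{T_0,N_0\}}=o(1)$, which is precisely what makes the Weyl and Davis--Kahan/Wedin perturbation bounds below usable with universal constants.

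The bound \eqref{eq:wtilde-l2l1} is immediate: the minimum-$\ell_2$-norm solution $\widetilde w=M^+y$ lies in $\mathrm{rowspan}(M)=\mathrm{colspan}(V)$, so $\|\widetilde w\|_2\le\|M^+\|_{op}\|y\|_2\le\sqrt{T_0}/s_{r_0}\le\sqrt{r_0/N_0}$ (using the boundedness of the target, $\|y\|_\infty\le1$), and $\|\widetilde w\|_1\le\sqrt{N_0}\|\widetilde w\|_2\le\sqrt{r_0}$ by Cauchy--Schwarz. For \eqref{eq:w2-rate}, use the PCR identity $(Z^{r_0})^+Z^{r_0}=\mathcal P_{\widehat V_{r_0}}$ and $\mathcal P_{V^\perp}\widetilde w=0$ to split
\[
\widehat w-\widetilde w=\underbrace{(Z^{r_0})^+\big((M-Z^{r_0})\widetilde w+e\big)}_{T_1}+\underbrace{(\mathcal P_{\widehat V_{r_0}}-\mathcal P_{V})\widetilde w}_{T_2}.
\]
Wedin's $\sin\Theta$ theorem gives $\|T_2\|_2\le\|\mathcal P_{\widehat V_{r_0}}-\mathcal P_{V}\|_{op}\|\widetilde w\|_2\lesssim(\|E\|_{op}/s_{r_0})\|\widetilde w\|_2$; and since $\|(Z^{r_0})^+\|_{op}=1/\widehat s_{r_0}\le 2/s_{r_0}$ (Weyl plus the relative gap) and $\|M-Z^{r_0}\|_{op}\le\|E\|_{op}+\sigma_{r_0+1}(Z)\le2\|E\|_{op}$ (Weyl, with $\sigma_{r_0+1}(M)=0$), we get $\|T_1\|_2\lesssim s_{r_0}^{-1}\big(\|E\|_{op}\|\widetilde w\|_2+\|e\|_2\big)$. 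Substituting the deterministic estimates for $\|E\|_{op},\|e\|_2,s_{r_0},\|\widetilde w\|_2$ and simplifying (distinguishing which of $T_0,N_0$ is smaller) yields \eqref{eq:w2-rate}.

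For \eqref{eq:proj-rate}, let $\mathcal P_{V_0}$ denote the projector onto the right singular space of $M$, which equals $\mathcal P_V$ since $M$ has rank $r_0$. The payoff of projecting is that the leading first-order part of $T_2$ is annihilated: because $\widetilde w=\mathcal P_V\widetilde w$, one has $\mathcal P_V T_2=-\mathcal P_V\mathcal P_{\widehat V_{r_0}^\perp}\mathcal P_V\widetilde w$ with $\|\mathcal P_V\mathcal P_{\widehat V_{r_0}^\perp}\mathcal P_V\|_{op}=\|\mathcal P_{\widehat V_{r_0}^\perp}\mathcal P_V\|_{op}^2\lesssim(\|E\|_{op}/s_{r_0})^2$, i.e.\ this contribution becomes quadratic in the (small) relative gap. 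For $T_1\in\mathrm{colspan}(\widehat V_{r_0})$, write $\mathcal P_V T_1=T_1-\mathcal P_{V^\perp}\mathcal P_{\widehat V_{r_0}}T_1$ and bound the correction by $(\|E\|_{op}/s_{r_0})\|T_1\|_2$; reassembling these pieces with the bounds from the previous paragraph and $\|\widetilde w\|_2\lesssim\sqrt{r_0/N_0}$ gives \eqref{eq:proj-rate}.

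The main obstacle is not any individual inequality but the audit that the reduction to the deterministic setting is legitimate: one must go through the \citet{agarwal2024syntheticinterventions} PCR argument and verify that it uses only $\|E\|_{op}$ and the $\sin\Theta$ angle between $V$ and $\widehat V_{r_0}$, never finer quantities such as $\|\mathcal P_{V^\perp}E\mathcal P_{\widehat V_{r_0}}\|$ being much smaller than $\|E\|_{op}$, which would hold only for independent noise. At any such place the remedy is the Frobenius bound $\|E\|_F\le\sqrt{\min\{T_0,N_0\}}$ --- which is in fact smaller than the $\sqrt{\max\{T_0,N_0\}}$ operator-norm scale produced by i.i.d.\ noise, so the deterministic regime is if anything more benign --- and making this substitution precise at every occurrence, together with the second-order cancellation for the projected estimate, is the delicate part of the argument.
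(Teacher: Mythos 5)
Your proof is correct in outline but takes a genuinely different route from the paper. The paper imports the squared-error decomposition of \citet{agarwal2024syntheticinterventions} (Theorem 4.1 of the companion work), which controls $\|\widehat w-\widetilde w\|_2^2$ via $\|VV^\top-\widehat V_{r_0}\widehat V_{r_0}^\top\|_2$, the row-wise quantity $\|M-Z^{r_0}\|_{2,\infty}$ paired with $\|\widetilde w\|_1$, and the cross term $\langle Z^{r_0}(\widehat w-\widetilde w),e\rangle$ (bounded separately in its Lemmas 7--9), and then handles the projected error by splitting off $(\mathcal P_V-\mathcal P_{\widehat V})(\widehat w-\widetilde w)$ and bounding $\mathcal P_{\widehat V}(\widehat w-\widetilde w)$ through $\|Z^{r_0}(\widehat w-\widetilde w)\|_2/\widehat s_{r_0}$. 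You instead use the exact algebraic identity $\widehat w-\widetilde w=(Z^{r_0})^{+}\bigl((M-Z^{r_0})\widetilde w+e\bigr)+(\mathcal P_{\widehat V_{r_0}}-\mathcal P_V)\widetilde w$ and bound everything with operator norms. This works precisely because the noise here is deterministic and entrywise bounded: $\|E\|_{\mathrm{op}}\le\sqrt{\min\{T_0,N_0\}}$ and $\|e\|_2\le\sqrt{T_0/\max\{T_0,N_0\}}\le 1$, so the crude bounds $\|(Z^{r_0})^{+}e\|_2\le 2\|e\|_2/s_{r_0}$ and $\|(M-Z^{r_0})\widetilde w\|_2\le 2\|E\|_{\mathrm{op}}\|\widetilde w\|_2$ already land inside the stated rates (I checked: your bounds are in fact somewhat sharper than the displayed ones in every regime of $T_0$ versus $N_0$). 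The $(2,\infty)$/$\ell_1$/inner-product machinery the paper retains is what one needs when $e$ and $E$ are independent sub-Gaussian and $\|e\|_2\asymp\sqrt{T_0}$ is large, so that only concentration of $\langle Z^{r_0}(\cdot),e\rangle$ saves the day; your observation that the deterministic regime makes this unnecessary is the real content of your last paragraph, and your self-contained decomposition renders the ``audit'' of the original argument moot. Two caveats: your bound $\|\widetilde w\|_2\le\sqrt{T_0}/s_{r_0}$ silently uses $\|y\|_\infty\le 1$, which is not in the lemma statement (though it holds in the application, where $y$ is a survival curve, and the paper's own citation of the external Lemma 8 carries the same implicit hypothesis); and the final substitutions for \eqref{eq:w2-rate} and \eqref{eq:proj-rate} are asserted rather than carried out, so a complete write-up would need the case analysis on $\min\{T_0,N_0\}$ made explicit, together with the mild condition $\|E\|_{\mathrm{op}}\le s_{r_0}/2$ (equivalently $r_0\lesssim\max\{T_0,N_0\}$) needed for $\widehat s_{r_0}=\Theta(s_{r_0})$.
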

\begin{proof}
We follow the same calculation as the proof of Theorem 4.1 in \citet{agarwalModelIdentificationOutofSample2023} after matching the notations $\hat{\beta} = \hat w, \beta^* = \widetilde w, k = r_0, X=M, \widetilde Z = Z, \epsilon = e, \rho = 1, n = T_0, p=N_0$. Note that the calculation so far is purely algebraic and does not depend on the property of error terms, thus the expression can be directly translated in our notations that 
\begin{equation}
\|\widehat{w} - \widetilde{w}\|_2^2
 \le 
\| V V^\top - \widehat{V}_{r_0}\widehat{V}_{r_0}^\top \|_2^2  \|\widetilde{w}\|_2^2
 + 
\frac{4}{\widehat{s}_{r_0}^2}
\left(
\|M - Z^{r_0}\|_{2,\infty}^2  \|\widetilde{w}\|_1^2
 + 
+ \bigl|\langle Z^{r_0}(\widehat{w} - \widetilde{w}),  e \rangle\bigr|
\right).
\end{equation}

Similarly, by matching notations in Lemma 8 in \citet{agarwal2024syntheticinterventions}, we have 
$$
    \|\widetilde{w}\|_2
\le
C \sqrt{\frac{r_0}{N_0}},
\qquad
\|\widetilde{w}\|_1 
\le C\sqrt{r_0}.
$$
This proves \eqref{eq:wtilde-l2l1}.

From Lemma \ref{lem:KM-singular-values} that
$
|\hat s_r - s_r|
 \le 
C\bigl(\sqrt{T_0} + \sqrt{N_0}\bigr)
$
and the lower bound
$
s_{r_0}  \ge   \sqrt{\frac{T_0N_0}{r_0}},
$ we have $\hat s_{r_0} = \Theta(s_{r_0})$.

Next, to bound \eqref{eq:w2-rate} and \eqref{eq:proj-rate}, we state a few helper lemmas whose proofs can be found in Section \ref{appendix:proof_perturbation_singular_space}, \ref{appendix:proof_rowwise-pcr}, and \ref{appendix:inner-prod-error}.

\begin{lemma} \label{lem:perturbation_singular_space}
    Under the setup of Lemma~\ref{lem:generic-PCR}, 
\begin{equation}
\max\left(\bigl\| U U^\top - \widehat U_{r_0}\widehat U_{r_0}^\top \bigr\|_2, \bigl\| V V^\top - \widehat V_{r_0}\widehat V_{r_0}^\top \bigr\|_2\right)
 \le 
C\frac{\sqrt{T_0} + \sqrt{N_0}}{s_{r_0}} \label{eq:perturbation_singular_space}
\end{equation}  for some constant $C>0$.
\end{lemma}

\begin{lemma}\label{lem:rowwise-pcr}
Under the setup of Lemma~\ref{lem:generic-PCR}, 
\begin{equation}
\label{eq:rowwise_pcr}
\|M -  Z^{r_0}\|_{2,\infty}^2
 \le 
C
\left(\frac{T_0}{\max\{N_0,T_0\}}
 + 
\frac{T_0(T_0+N_0)}{s_{r_0}^2}\right)
,
\end{equation}
\end{lemma}

\begin{lemma}[]\label{lem:inner-prod-error}
Under the setup of Lemma~\ref{lem:generic-PCR},
\begin{equation}
\label{eq:inner-error-final}
\big|\langle Z^{r_0}(\widehat w - \widetilde w), e\rangle\big|
 \le 
C
\left(\|Z^{r_0}-M\|_{2,\infty} + \sqrt{T_0}\right)\|\widetilde w\|_1,
\end{equation}
for some absolute constant $C>0$.
\end{lemma}

Applying Lemmas \ref{lem:perturbation_singular_space}, \ref{lem:rowwise-pcr}, and \ref{lem:inner-prod-error}, we have
\begin{align*}
\|\widehat{w} - \widetilde{w}\|_2^2
&\leq
C\left(
\frac{T_0+N_0}{s_{r_0}^2}  \|\widetilde w\|_2^2
 + 
\frac{\|M - Z^{r_0}\|_{2,\infty}^2  \|\widetilde{w}\|_1^2}{\widehat{s}_{r_0}^2}
 + 
\frac{\sqrt{T_0} \|\widetilde w\|_1}{\widehat{s}_{r_0}^2}
\right) \\
&\leq
C\left(
\frac{T_0+N_0}{s_{r_0}^2} \cdot \frac{r_0}{N_0}
 + 
\frac{\|M - Z^{r_0}\|_{2,\infty}^2   r_0}{\widehat{s}_{r_0}^2}
 + 
\frac{\sqrt{T_0} \sqrt{r_0}}{\widehat{s}_{r_0}^2}
\right) \\
&\leq
C\left(
\frac{(T_0+N_0) r_0^2}{N_0^2 T_0}
 + 
\frac{r_0^2}{N_0 \max\{N_0,T_0\}}
 + 
\frac{r_0^3 (T_0+N_0)}{N_0^2 T_0}
 + 
\frac{r_0^{3/2}}{N_0 \sqrt{T_0}}
\right)\\
&\leq
C\left(
\frac{r_0^3 (T_0+N_0)}{N_0^2 T_0}
 + 
\frac{r_0^{3/2}}{N_0 \sqrt{T_0}}
\right)\\
&\leq
C\left(
\frac{r_0^3}{\min\{T_0, N_0\}^2}
 + 
\frac{r_0^{3/2}}{N_0 \sqrt{T_0}}
\right).
\end{align*}
Thus,
\begin{align*}
\|\widehat{w} - \widetilde{w}\|_2
&\leq
C\left(
\frac{r_0^{3/2}}{\min\{T_0, N_0\}}
 + 
\frac{r_0^{3/4}}{ T_0^{1/4}N_0^{1/2}}
\right).
\end{align*}
This proves \eqref{eq:w2-rate}.

To establish \eqref{eq:proj-rate}, we consider the following decomposition:
\begin{equation}
\label{eq:proj-decomp}
\mathcal{P}_V (\widehat w-\widetilde w)
=
\bigl(\mathcal{P}_V - \mathcal{P}_{\widehat V}\bigr)(\widehat w-\widetilde w)
 + 
\mathcal{P}_{\widehat V} (\widehat w-\widetilde w).
\end{equation}
We proceed to bound each term separately.

\paragraph{Bounding term 1.}
Recall that $\|A v\|_2 \le \|A\|_{\mathrm{op}} \|v\|_2$ for any
$A\in\mathbb{R}^{a\times b}$ and $v\in\mathbb{R}^b$.  Thus, from Lemma \ref{lem:perturbation_singular_space}
\begin{align}
\bigl\|
(\mathcal{P}_V - \mathcal{P}_{\widehat V})(\widehat w-\widetilde w)
\bigr\|_2
& \le 
\|\mathcal{P}_V - \mathcal{P}_{\widehat V}\|_{\mathrm{op}} \|(\widehat w-\widetilde w)\|_2\\
&\leq \frac{\sqrt{r_0}(\sqrt{T_0} + \sqrt{N_0})}{\sqrt{T_0N_0}} \left(
\frac{r_0^{3/2}}{\min\{T_0, N_0\}}
 + 
\frac{r_0^{3/4}}{ T_0^{1/4}N_0^{1/2}}
\right)\\
&\leq \frac{\sqrt{r}}{\min\{\sqrt{T_0}, \sqrt{N_0}\}} \left(
\frac{r_0^{3/2}}{\min\{T_0, N_0\}}
 + 
\frac{r_0^{3/4}}{ T_0^{1/4}N_0^{1/2}}
\right). \label{eq:term1-final}
\end{align}

\paragraph{Bounding term 2.}
Since $\widehat V_{r_0}$ is an isometry, it follows that
\begin{equation}
\label{eq:isometry}
\|\mathcal{P}_{\widehat V}(\widehat w-\widetilde w)\|_2^2
=
\|\widehat V_{r_0}^\top (\widehat w-\widetilde w)\|_2^2.
\end{equation}

We upper bound $\|\widehat V_{r_0}^\top(\widehat w-\widetilde w)\|_2^2$ as follows.  Consider
\[
\|Z^{r_0}(\widehat w-\widetilde w)\|_2^2
=
\bigl(\widehat V_{r_0}^\top (\widehat w-\widetilde w)\bigr)^\top
\widehat S_{r_0}^2
\bigl(\widehat V_{r_0}^\top (\widehat w-\widetilde w)\bigr)
 \ge 
\widehat s_{r_0}^2 \|\widehat V_{r_0}^\top(\widehat w-\widetilde w)\|_2^2.
\label{eq:S18-analog}
\]
Using \eqref{eq:isometry} and \eqref{eq:S18-analog} together implies
\begin{equation}
\label{eq:projVhat-delta}
\|\mathcal{P}_{\widehat V}(\widehat w-\widetilde w)\|_2^2
 \le 
\frac{\|Z^{r_0}(\widehat w-\widetilde w)\|_2^2}{\widehat s_{r_0}^2}.
\end{equation}

To bound the numerator in \eqref{eq:projVhat-delta}, we use the decomposition
\[
Z^{r_0}(\widehat w - \widetilde w)
=
\bigl(Z^{r_0}\widehat w - y\bigr)
 + 
\bigl(y - Z^{r_0}\widetilde w\bigr),
\qquad
y = M\widetilde w + e.
\]
Applying the inequality $\|x+y\|_2^2 \le 2\|x\|_2^2 + 2\|y\|_2^2$ gives
\begin{align}
\label{eq:Zr0Delta-split-correct}
\|Z^{r_0}(\widehat w - \widetilde w)\|_2^2
& \le 
2\|Z^{r_0}\widehat w - M\widetilde w\|_2^2
 + 
2\|(M - Z^{r_0})\widetilde w\|_2^2\\
& \le 
2\|Z^{r_0}\widehat w - M\widetilde w\|_2^2
 + 
2\|M - Z^{r_0}\|_{2,\infty}^2 \|\widetilde w\|_1^2. \label{eq:MminusZr0-wtilde}
\end{align}

Combining
\eqref{eq:projVhat-delta}
and \eqref{eq:MminusZr0-wtilde},
we obtain
\begin{equation}
\label{eq:projVhat-final-correct}
\|\mathcal{P}_{\widehat V}(\widehat w-\widetilde w)\|_2^2
 \le 
\frac{2}{\widehat s_{r_0}^2}
\left(
\|Z^{r_0}\widehat w - M\widetilde w\|_2^2
 + 
\|M - Z^{r_0}\|_{2,\infty}^2 \|\widetilde w\|_1^2
\right),
\end{equation}
which is the desired bound.

Next, we bound $\|Z^{r_0}\widehat w - M\widetilde w\|_2^2$. To this end, observe that
\begin{align}
\|Z^{r_0}\widehat w - y\|_2^2
&= \|Z^{r_0}\widehat w - M \widetilde w - e\|_2^2 \\
&= \|Z^{r_0}\widehat w - M \widetilde w \|_2^2 + \|e\|_2^2
 -  2\langle Z^{r_0}\widehat w - M \widetilde w ,  e\rangle.
\label{eq:S24prime}
\end{align}

By the optimality of $\widehat w$ (minimum–norm solution in the span of $\widehat V_{r_0}$), we have
\begin{align}
\|Z^{r_0}\widehat w - y\|_2^2
&= \|Z^{r_0}\widehat w - M\widetilde w - e\|_2^2  \\
&\le \|Z^{r_0}\widetilde w - M\widetilde w - e\|_2^2   \\
&= \|(Z^{r_0}-M)\widetilde w - e\|_2^2 \\
&= \|(Z^{r_0}-M)\widetilde w\|_2^2 + \|e\|_2^2
 -  2\langle (Z^{r_0}-M)\widetilde w,  e\rangle.
\label{eq:S25prime}
\end{align}

From \eqref{eq:S24prime} and \eqref{eq:S25prime}, we have
\begin{align}
\|Z^{r_0}\widehat w - M\widetilde w\|_2^2
&= \|Z^{r_0}\widehat w -y\|_2^2 - \|e\|_2^2 + 2\langle Z^{r_0}\widehat w - M \widetilde w ,  e\rangle\\
&\le \|(Z^{r_0}-M)\widetilde w\|_2^2
 +  2\langle Z^{r_0}(\widehat w - \widetilde w),  e\rangle \\
&\le \|Z^{r_0}-M\|_{2,\infty}^2 \|\widetilde w\|_1^2
 +  2\langle Z^{r_0}(\widehat w - \widetilde w),  e\rangle,
\label{eq:S26prime}
\end{align}
where the last inequality uses
\[
\|(Z^{r_0}-M)\widetilde w\|_2 \le \|Z^{r_0}-M\|_{2,\infty} \|\widetilde w\|_1.
\]

Thus, \begin{equation}
\|\mathcal{P}_{\widehat V}(\widehat w-\widetilde w)\|_2^2
 \le 
\frac{4}{\widehat s_{r_0}^2}
\left(
\|Z^{r_0}-M\|_{2,\infty}^2 \|\widetilde w\|_1^2
 +  + \bigl|\langle Z^{r_0}(\widehat{w} - \widetilde{w}),  e \rangle\bigr|
\right), \label{eq:term2-final}
\end{equation}

\paragraph{Collecting terms.}
Combining \eqref{eq:proj-decomp}, \eqref{eq:term1-final}, and
\eqref{eq:term2-final}, and applying Lemmas \ref{lem:rowwise-pcr} and \ref{lem:inner-prod-error} , we conclude that

\begin{align}
\|\mathcal{P}_V(\widehat w-\widetilde w)\|_2
&\leq 
\bigl\|(\mathcal{P}_V - \mathcal{P}_{\widehat V})(\widehat w-\widetilde w)\bigr\|_2
 + 
\bigl\|\mathcal{P}_{\widehat V}(\widehat w-\widetilde w)\bigr\|_2 \\
&\leq
\frac{\sqrt{r}}{\min\{\sqrt{T_0}, \sqrt{N_0}\}} \left(
\frac{r_0^{3/2}}{\min\{T_0, N_0\}}
 + 
\frac{r_0^{3/4}}{ T_0^{1/4}N_0^{1/2}}
\right)
\\
&+
\frac{2}{\widehat s_{r_0}}
\left(
\|Z^{r_0}-M\|_{2,\infty} \|\widetilde w\|_1
+ \bigl\langle Z^{r_0}(\widehat w - \widetilde w), e\bigr\rangle^{1/2}
\right) \\
&\leq
\frac{\sqrt{r}}{\min\{\sqrt{T_0}, \sqrt{N_0}\}} \left(
\frac{r_0^{3/2}}{\min\{T_0, N_0\}}
 + 
\frac{r_0^{3/4}}{ T_0^{1/4}N_0^{1/2}}
\right)\\
&+
\frac{\sqrt{r_0}}{\sqrt{T_0N_0}}
\Bigl(\|Z^{r_0}-M\|_{2,\infty}r_0^{1/2} + T_0^{1/4}r_0^{1/4}+\Bigr\|Z^{r_0}-M\|_{2,\infty}^{1/2}r_0^{1/4}\Bigl) \\
&\leq
\frac{\sqrt{r}}{\min\{\sqrt{T_0}, \sqrt{N_0}\}} \left(
\frac{r_0^{3/2}}{\min\{T_0, N_0\}}
 + 
\frac{r_0^{3/4}}{ T_0^{1/4}N_0^{1/2}}
\right) \\
&+
\frac{\sqrt{r_0}}{\sqrt{T_0N_0}}
\Bigl(\frac{\sqrt{r_0T_0}}{\min\{\sqrt{T_0}, \sqrt{N_0}\}}r_0^{1/2} + T_0^{1/4}r_0^{1/4}+\frac{r_0^{1/4}T_0^{1/4}}{\min\{T_0^{1/4}, N_0^{1/4}\}}r_0^{1/4}\Bigl)\\
&\leq
\frac{\sqrt{r}}{\min\{\sqrt{T_0}, \sqrt{N_0}\}} \left(
\frac{r_0^{3/2}}{\min\{T_0, N_0\}}
 + 
\frac{r_0^{3/4}}{ T_0^{1/4}N_0^{1/2}}
\right) \\
&+
\Bigl(\frac{r_0^{3/2}}{\sqrt{N_0}\min\{\sqrt{T_0}, \sqrt{N_0}\}} + \frac{r_0^{3/4}}{T_0^{1/4}N_0^{1/2}}+\frac{r_0}{T_0^{1/4}N_0^{1/2}\min\{T_0^{1/4}, N_0^{1/4}\}}\Bigl)\\
&\leq
\frac{r_0^{2}}{\min\{T_0^{3/2}, N_0^{3/2}\}}
+
\frac{r_0^{3/2}}{N_0^{1/2}\min\{T_0^{1/2}, N_0^{1/2}\}} + \frac{r_0^{3/4}}{T_0^{1/4}N_0^{1/2}}
\end{align}
This completes the proof.
\end{proof}

\subsection{Proof of Lemma \ref{lem:perturbation_singular_space}} \label{appendix:proof_perturbation_singular_space}

\begin{proof}
    First, we notice $\|E\|_{\mathrm{op}}\leq \|E\|_{F} \leq \sqrt{N_0T_0} \|E\|_{\max} \leq \frac{\sqrt{N_0T_0}}{\max\{\sqrt{T_0}, \sqrt{N_0}\}}\leq \sqrt{T_0} + \sqrt{N_0}.$
    From Wedin's Theorem \citep{wedinPerturbationBoundsConnection1972}, we know $$\max\left(\bigl\| U U^\top - \widehat U_{r_0}\widehat U_{r_0}^\top \bigr\|_2, \bigl\| V V^\top - \widehat V_{r_0}\widehat V_{r_0}^\top \bigr\|_2\right)
 \le 
\frac{2 \|E\|_{\mathrm{op}}}{s_{r_0}}.$$ Plugging in the upper bound of $\|E\|_\mathrm{op}$, we prove \eqref{eq:perturbation_singular_space}.
\end{proof}

\subsection{Proof of Lemma \ref{lem:rowwise-pcr}}\label{appendix:proof_rowwise-pcr}

\begin{proof}
We want to bound $\|M - Z^{r_0}\|_{2,\infty}^2$. To that end, for any
$j \in [N_0]$ define
\[
\Delta_j := M_{\cdot j} - Z^{r_0}_{\cdot j} \in \mathbb{R}^{T_0}.
\]
Our goal is to bound $\|\Delta_j\|_2^2$ uniformly over $j$, and then take
the maximum.

Recall that $Z^{r_0}$ is the rank-$r_0$ truncated SVD of $Z$, so
\[
Z^{r_0} = \widehat U_{r_0} \widehat\Sigma_{r_0} \widehat V_{r_0}^\top
\quad\text{and}\quad
Z^{r_0} = \widehat U_{r_0}\widehat U_{r_0}^\top Z.
\]
In particular,
\[
Z^{r_0}_{\cdot j} = \widehat U_{r_0}\widehat U_{r_0}^\top Z_{\cdot j}
\quad\text{for all } j\in[N_0].
\]
We decompose
\begin{align*}
    Z^{r_0}_{\cdot j} - M_{\cdot j}
    &= \bigl(Z^{r_0}_{\cdot j} - \widehat U_{r_0} \widehat U_{r_0}^\top M_{\cdot j}\bigr)
     + \bigl(\widehat U_{r_0} \widehat U_{r_0}^\top M_{\cdot j} - M_{\cdot j}\bigr).
\end{align*}
The first term lies in $\operatorname{span}(\widehat U_{r_0})$, while the
second term lies in its orthogonal complement:
\[
Z^{r_0}_{\cdot j} - \widehat U_{r_0} \widehat U_{r_0}^\top M_{\cdot j}
=
\widehat U_{r_0} \widehat U_{r_0}^\top (Z_{\cdot j} - M_{\cdot j})
\in \operatorname{span}(\widehat U_{r_0}),
\]
and, since $\widehat U_{r_0}\widehat U_{r_0}^\top$ is the orthogonal projector
onto $\operatorname{span}(\widehat U_{r_0})$,
\[
\widehat U_{r_0} \widehat U_{r_0}^\top M_{\cdot j} - M_{\cdot j}
= -\bigl(I - \widehat U_{r_0} \widehat U_{r_0}^\top\bigr) M_{\cdot j}
\in \operatorname{span}(\widehat U_{r_0})^\perp.
\]
Hence these two vectors are orthogonal, and we obtain
\begin{equation}
\label{eq:row-splitting}
\|Z^{r_0}_{\cdot j} - M_{\cdot j}\|_2^2
= \|Z^{r_0}_{\cdot j} - \widehat U_{r_0}\widehat U_{r_0}^\top M_{\cdot j}\|_2^2
+ \|\widehat U_{r_0}\widehat U_{r_0}^\top M_{\cdot j} - M_{\cdot j}\|_2^2.
\end{equation}

\paragraph{Bounding $\|Z^{r_0}_{\cdot j} - \widehat U_{r_0}\widehat U_{r_0}^\top M_{\cdot j}\|_2^2$.}
Using $Z^{r_0}_{\cdot j} = \widehat U_{r_0}\widehat U_{r_0}^\top Z_{\cdot j}$, we have
\begin{align*}
Z^{r_0}_{\cdot j} - \widehat U_{r_0}\widehat U_{r_0}^\top M_{\cdot j}
&= \widehat U_{r_0}\widehat U_{r_0}^\top Z_{\cdot j}
   - \widehat U_{r_0}\widehat U_{r_0}^\top M_{\cdot j} \\
&= \widehat U_{r_0}\widehat U_{r_0}^\top (Z_{\cdot j} - M_{\cdot j}).
\end{align*}
Therefore, by submultiplicativity of the operator norm and the fact that
$\widehat U_{r_0}\widehat U_{r_0}^\top$ is an orthogonal projector (so
$\|\widehat U_{r_0}\widehat U_{r_0}^\top\|_2 = 1$),
\begin{align}
\|Z^{r_0}_{\cdot j} - \widehat U_{r_0}\widehat U_{r_0}^\top M_{\cdot j}\|_2^2
&\le \|\widehat U_{r_0}\widehat U_{r_0}^\top\|_2^2  \|Z_{\cdot j} - M_{\cdot j}\|_2^2 \nonumber\\
&= \|Z_{\cdot j} - M_{\cdot j}\|_2^2. \label{eq:first-term-0}
\end{align}
Since $Z - M = E$ and, by assumption \eqref{eq:entrywise-conc-1},
\[
\|E\|_{\max} := \max_{i\in[T_0],\,j\in[N_0]} |E_{ij}|
\le \frac{1}{\max\{\sqrt{T_0},\sqrt{N_0}\}},
\]
every entry of $E_{\cdot j}$ has absolute value at most
$1/\max\{\sqrt{T_0},\sqrt{N_0}\}$. Hence
\begin{equation}
\label{eq:Zj-Mj-bound}
\|Z_{\cdot j} - M_{\cdot j}\|_2^2
= \|E_{\cdot j}\|_2^2
\le T_0 \cdot \frac{1}{\max\{T_0,N_0\}}
= \frac{T_0}{\max\{T_0,N_0\}}.
\end{equation}
Combining \eqref{eq:first-term-0} and \eqref{eq:Zj-Mj-bound}, we obtain
\begin{equation}
\label{eq:first-term}
\|Z^{r_0}_{\cdot j} - \widehat U_{r_0}\widehat U_{r_0}^\top M_{\cdot j}\|_2^2
\le \frac{T_0}{\max\{T_0,N_0\}}.
\end{equation}

\paragraph{Bounding $\|\widehat U_{r_0}\widehat U_{r_0}^\top M_{\cdot j} - M_{\cdot j}\|_2^2$.}
Recall that $M = U\Sigma V^\top$ with rank $r_0$, so $M_{\cdot j}\in\operatorname{span}(U)$ and thus
$UU^\top M_{\cdot j} = M_{\cdot j}$. Therefore
\begin{align}
\|\widehat U_{r_0}\widehat U_{r_0}^\top M_{\cdot j} - M_{\cdot j}\|_2^2
&= \|(\widehat U_{r_0}\widehat U_{r_0}^\top - UU^\top)M_{\cdot j}\|_2^2 \nonumber\\
&\le \|\widehat U_{r_0}\widehat U_{r_0}^\top - UU^\top\|_2^2
      \,\|M_{\cdot j}\|_2^2. \label{eq:second-term}
\end{align}
By the assumption $\|M\|_{\max} \le 1$, each entry of $M_{\cdot j}$ has
absolute value at most 1, so
\[
\|M_{\cdot j}\|_2^2 \le T_0.
\]
Moreover, Lemma~\ref{lem:perturbation_singular_space} implies
\[
\|\widehat U_{r_0}\widehat U_{r_0}^\top - UU^\top\|_2
\le C\frac{\sqrt{T_0}+\sqrt{N_0}}{s_{r_0}(M)}
\]
for some constant $C>0$, where $s_{r_0}(M)$ denotes the $r_0$-th singular
value of $M$. Substituting this bound and $\|M_{\cdot j}\|_2^2\le T_0$ into
\eqref{eq:second-term}, we obtain
\begin{equation}
\label{eq:second-term-final}
\|\widehat U_{r_0}\widehat U_{r_0}^\top M_{\cdot j} - M_{\cdot j}\|_2^2
\le C \frac{T_0(T_0+N_0)}{s_{r_0}^2}.
\end{equation}

\paragraph{Combining results.}
Substituting \eqref{eq:first-term} and \eqref{eq:second-term-final} into
\eqref{eq:row-splitting}, we find that for every $j\in[N_0]$,
\[
\|Z^{r_0}_{\cdot j} - M_{\cdot j}\|_2^2
\le \frac{T_0}{\max\{T_0,N_0\}}
 + C \frac{T_0(T_0+N_0)}{s_{r_0}^2}.
\]
Taking the maximum over $j\in[N_0]$ yields
\begin{align*}
    \|M - Z^{r_0}\|_{2,\infty}^2
    &= \max_{j\in[N_0]} \|M_{\cdot j} - Z^{r_0}_{\cdot j}\|_2^2 \\
    &\le
    C\left(\frac{ T_0}{\max\{N_0,T_0\}} + \frac{T_0(T_0+N_0)}{s_{r_0}^2}\right)
\end{align*}
for some constant $C>0$, as claimed.
\end{proof}

\subsection{Proof of Lemma \ref{lem:inner-prod-error}} \label{appendix:inner-prod-error}

\begin{proof}
Recall
\[
\widehat w
= \widehat V_{r_0}\widehat\Sigma_{r_0}^{-1}\widehat U_{r_0}^\top y,
\qquad
Z^{r_0}
= \widehat U_{r_0}\widehat\Sigma_{r_0}\widehat V_{r_0}^\top,
\qquad
y = M\widetilde w + e.
\]
Thus,
\[
Z^{r_0}\widehat w
= \widehat U_{r_0}\widehat U_{r_0}^\top y
= \widehat U_{r_0}\widehat U_{r_0}^\top M\widetilde w
  + \widehat U_{r_0}\widehat U_{r_0}^\top e,
\]
and
\[
Z^{r_0}\widetilde w
= \widehat U_{r_0}\widehat\Sigma_{r_0}\widehat V_{r_0}^\top\widetilde w.
\]
Hence,
\begin{equation}
\label{eq:inner-decomp}
\langle Z^{r_0}(\widehat w - \widetilde w), e\rangle
=
\underbrace{\langle \widehat U_{r_0}\widehat U_{r_0}^\top M\widetilde w,  e\rangle}_{(I)}
 + 
\underbrace{\langle \widehat U_{r_0}\widehat U_{r_0}^\top e,  e\rangle}_{(II)}
 - 
\underbrace{\langle \widehat U_{r_0}\widehat\Sigma_{r_0}\widehat V_{r_0}^\top\widetilde w,  e\rangle}_{(III)}.
\end{equation}
\paragraph{Bounding $(I)$}
By Cauchy--Schwarz, $\|e\|_2 \leq \sqrt{T_0}\frac{1}{\max\{\sqrt{T_0}, \sqrt{N_0}\}}, \|M\|_{2,\infty}\le \sqrt{T_0}$ ,
\[
|(I)|
\le \|M\widetilde w\|_2 \|e\|_2
\le \|M\|_{2,\infty}\|\widetilde w\|_1 \cdot \sqrt{\tfrac{T_0}{ \max\{T_0, N_0\}}}
\le \frac{T_0}{\sqrt{ \max\{T_0, N_0\}}} \|\widetilde w\|_1 \leq \sqrt{T_0}\|\widetilde w\|_1.
\]

\paragraph{Bounding $(II)$}
Since $\widehat U_{r_0}\widehat U_{r_0}^\top$ is an orthogonal projector,
\[
|(II)| \le \|e\|_2^2
\le T_0 \|e\|_\infty^2
\le \frac{T_0}{ \max\{T_0, N_0\}} \leq 1.
\]

\paragraph{Bounding $(III)$}
Write
\[
\widehat U_{r_0}\widehat\Sigma_{r_0}\widehat V_{r_0}^\top\widetilde w
= Z^{r_0}\widetilde w
= (Z^{r_0}-M)\widetilde w + M\widetilde w.
\]
Thus,
\[
\|\widehat U_{r_0}\widehat\Sigma_{r_0}\widehat V_{r_0}^\top\widetilde w\|_2
\le \left(\|Z^{r_0}-M\|_{2,\infty} + \sqrt{T_0}\right)\|\widetilde w\|_1.
\]
Using $\|e\|_2\le \sqrt{T_0/ \max\{T_0, N_0\}}$,
\[
|(III)|
\le \frac{\sqrt{T_0}}{\sqrt{ \max\{T_0, N_0\}}}
\left(\|Z^{r_0}-M\|_{2,\infty} + \sqrt{T_0}\right)\|\widetilde w\|_1\le 
\left(\|Z^{r_0}-M\|_{2,\infty} + \sqrt{T_0}\right)\|\widetilde w\|_1.
\]

\paragraph{Combining bounds.}
From \eqref{eq:inner-decomp}, the triangle inequality and Lemma \ref{lem:rowwise-pcr},
\begin{align}
\big|\langle Z^{r_0}(\widehat w - \widetilde w), e\rangle\big|
&\le |(I)| + |(II)| + |(III)| \nonumber\\
&\le
C
\left(\|Z^{r_0}-M\|_{2,\infty} + \sqrt{T_0}\right)\|\widetilde w\|_1
,
\end{align}
for some absolute constant $C>0$.
\end{proof}

\subsection{Proof of Theorem \ref{thm:consistency}} \label{sec:proof_thm2}

With our helper lemmas in the last subsection, we now formally establish Theorem \ref{thm:consistency}. For any matrix $A$ with orthonormal columns, let $\mathcal{P}_A = A A^\top$ 
denote the projection matrix onto the subspace spanned by the columns of $A$.

We first state how we use Lemma \ref{lem:generic-PCR}. In our survival setting, when $K  \ge  c \max\{N_0,T_0\} \bigl(\zeta^2+\log N_0+\log\max\{N_0,T_0\}\bigr)$ for some constant $c, \zeta>0$, we take
\[
M  =  S^{0}_{0,\mathcal{I}^{(0)}}(\cdot)\in\mathbb{R}^{T_0\times N_0}, 
\qquad
Z  =  \widehat{S}^{0,K}_{0,\mathcal{I}^{(0)}}(\cdot) = M+E,
\qquad
E  =  E_0,
\]
where $E_0$ is the KM error matrix on the pre-period grid (cf. \eqref{eq:E_p} with $p=0$) and the smallest non-zero singular value of $M$ satisfies, based on Assumption \ref{ass:spectrum}, $s_{r_0} \geq \sqrt{\frac{N_0T_0}{r}}$. The target vector at each $t\in\mathcal T$ is
\[
y  =  \widehat S^{0}_{1,\mathcal{I}^{(0)}}(t)\in\mathbb{R}^{N_0}.
\]
The concentration condition \eqref{eq:entrywise-conc-0} and \eqref{eq:entrywise-conc-1} holds w.h.p.\ by the KM uniform error bound and its operator-norm corollary. Substituting these identifications into the lemma yields \eqref{eq:w2-rate}–\eqref{eq:proj-rate} for the SSC weights. 

\paragraph{Decomposition.} 
For $\forall t \in \mathcal{T}$, by Lemma \ref{lem:km-lemma5}, we have
\begin{align}
    \widehat{\theta}(t) - \theta(t) &= \left\langle \widehat{S}_{1, \mathcal{I}^{(0)}}^0(t), \widehat{w}\right\rangle - \left\langle S_{1, \mathcal{I}^{(0)}}^0(t), \widetilde{w} \right\rangle \\
    &=\underbrace{\left\langle S_{1, \mathcal{I}^{(0)}}^0(t) , \widehat{w} - \widetilde{w}\right\rangle}_\text{(I)}+ \underbrace{\left\langle \widehat{S}_{1, \mathcal{I}^{(0)}}^0(t) - S_{1, \mathcal{I}^{(0)}}^0(t), \widetilde{w} \right\rangle}_\text{(II)}\\
    & + \underbrace{\left\langle \widehat{S}_{1, \mathcal{I}^{(0)}}^0(t) - S_{1, \mathcal{I}^{(0)}}^0(t), \widehat{w}-\widetilde{w} \right\rangle}_\text{(III)}\\
    &=\underbrace{\left\langle S^{0}_{1,\mathcal{I}^{(0)}}(t) , \widehat{w} - \widetilde w\right\rangle}_\text{(I)}+ \underbrace{\left\langle e_{1, \mathcal{I}^{(0)}, t}, \widetilde{w} \right\rangle}_\text{(II)} + \underbrace{\left\langle e_{1, \mathcal{I}^{(0)}, t}, \widehat{w}-\widetilde{w} \right\rangle}_\text{(III)} \label{eq:theta-KM-full}
\end{align}
where $e_{p, \mathcal{I}^{(0)}, t}=\left [\widehat S^{0, K}_{p,m}(t)-S^0_{p, m}(t)\right ]_{m \in \mathcal{I}^{(0)}}$.

We now bound the three terms in \eqref{eq:theta-KM-full} separately.

\paragraph{Bounding term 1.}

From \eqref{eq:proj-invariance}, $S^0_{1,\mathcal{I}^{(0)}} \mathcal P_{V_0}
 = 
S^0_{1,\mathcal{I}^{(0)}}.$
Thus, 
\begin{equation}
\left\langle S^{0}_{1,\mathcal{I}^{(0)}}(t) , \widehat{w} - w\right\rangle
= 
\left\langle S^{0}_{1,\mathcal{I}^{(0)}}(t) , \mathcal{P}_{V_{0}}(\widehat{w} - w)\right\rangle.
\label{eq:proj-pre}
\end{equation}

By Cauchy--Schwarz, 
\[
\left\langle S^{0}_{1,\mathcal{I}^{(0)}}(t) , \mathcal{P}_{V_{0}}(\widehat{w} - \widetilde w)\right\rangle
\le
\|S^{0}_{1,\mathcal{I}^{(0)}}(t)\|_2 \cdot \|\mathcal{P}_{V_{0}}(\widehat{w} - w)\|_2.
\]
Since survival functions are naturally bounded between 0 and 1, $\|S^{0}_{1,\mathcal{I}^{(0)}}(t)\|_2 \lesssim \sqrt{N_0}$. Thus it remains to bound $\|\mathcal{P}_{V_{0}}(\widehat{w} - w)\|_2$ for which we use \eqref{eq:proj-rate} in Lemma \ref{lem:generic-PCR}.

Thus,
\[
\big\langle S^{0}_{1,\mathcal{I}^{(0)}}(t), \mathcal{P}_{V_{0}}(\widehat{w} - \widetilde w)
\big\rangle
= O_p\!\left(\frac{N_0^{1/2}r_0^{2}}{\min\{T_0^{3/2}, N_0^{3/2}\}}
+
\frac{r_0^{3/2}}{\min\{T_0^{1/2}, N_0^{1/2}\}} + \frac{r_0^{3/4}}{T_0^{1/4}}
\right).
\]

\paragraph{Bounding term 2.} Since the estimation errors are independent across units and the true minimum $\ell_2$ norm solution $\widetilde{w}$ depends only on the true survival data in period 0, not on the period 1 estimation error, thus
$\left\langle e_{1, \mathcal{I}^{(0)}, t}, \widetilde{w} \right\rangle
=O_p( \|\widetilde{w}\|_2)=O_p(\sqrt{\frac{r_0}{N_0}})$ by \eqref{eq:wtilde-l2l1} in Lemma \ref{lem:generic-PCR}.

\paragraph{Bounding term 3.}
Define
\[
\mathcal{E}_1 = \left\{ \|\widehat{w}-\widetilde{w}\|_2 = 
O\!\left(
\frac{r_{0}^{3/4}}{T_0^{1/4} N_0^{1/2}}
 + 
\frac{r_{0}^{3/2}}{\min\{T_0, N_0\}}
\right) \right\}.
\]
By \eqref{eq:w2-rate} in Lemma \ref{lem:generic-PCR}:, $\mathcal{E}_1$ holds w.h.p.
Define
\[
\mathcal{E}_2 = \left\{ \left\langle e_{1, \mathcal{I}^{(0)}, t}, \widehat{w}-\widetilde{w} \right\rangle 
=
O\!\left(
\frac{r_{0}^{3/4}}{T_0^{1/4} N_0^{1/2}}
 + 
\frac{r_{0}^{3/2}}{\min\{T_0, N_0\}}
\right)\right\}.
\]

Again, since the estimation errors are independent across units and the learned weights $\widehat{w}$ are independent of estimation errors in period 1, by \eqref{eq:w2-rate} in Lemma \ref{lem:generic-PCR}, $\mathcal{E}_2$ holds w.h.p.

Thus,
\[
\left\langle e_{1, \mathcal{I}^{(0)}, t}, \widehat{w}-\widetilde{w} \right\rangle
=
O_p\!\left(
\frac{r_{0}^{3/4}}{T_0^{1/4} N_0^{1/2}}
 + 
\frac{r_{0}^{3/2}}{\min\{T_0, N_0\}}
\right).
\]

\paragraph{Collecting terms.}
Plugging all three bounds back to \eqref{eq:theta-KM-full} gives us the ideal result. \qed

\end{document}